\documentclass{article}

\PassOptionsToPackage{dvipsnames,table}{xcolor}
\usepackage{iclr2027_conference,times}
\usepackage{amsmath,amsfonts,bm}

\def\eqref#1{equation~\ref{#1}}

\def\1{\bm{1}}

\DeclareMathAlphabet{\mathsfit}{\encodingdefault}{\sfdefault}{m}{sl}
\SetMathAlphabet{\mathsfit}{bold}{\encodingdefault}{\sfdefault}{bx}{n}

\newcommand{\E}{\mathbb{E}}

\usepackage{amsmath,amssymb,mathtools}
\usepackage{amsthm}
\usepackage{booktabs}
\usepackage{multirow}
\usepackage{graphicx}
\usepackage{float}
\usepackage{xurl}
\usepackage{enumitem}
\setlist{topsep=2pt,itemsep=1pt,parsep=0pt}
\usepackage{pifont}        
\usepackage{tcolorbox}     
\tcbuselibrary{breakable}

\DeclareUnicodeCharacter{266B}{}

\definecolor{BrightTeal}{HTML}{45B8AC}
\definecolor{AquaMint}{HTML}{00B89F}
\definecolor{darkred}{RGB}{178,34,34}
\definecolor{questionbg}{RGB}{235,249,255}     
\definecolor{questionframe}{RGB}{135,190,230}  
\definecolor{insightbg}{RGB}{255,250,240}      
\definecolor{insightframe}{RGB}{200,160,100}   

\usepackage[
  colorlinks=true,
  linkcolor=purple,
  citecolor=Violet,
  urlcolor=AquaMint
]{hyperref}

\newcommand{\ours}{\textsc{\textbf{Dia}}}
\newcommand{\oursfull}{\textbf{D}isagreement-triggered \textbf{I}ndependent \textbf{A}rbitration}
\newcommand{\tb}[1]{\textbf{#1}}

\newcommand{\pcref}[1]{\textcolor{purple}{\ref{#1}}}
\definecolor{paleaccent}{HTML}{F3EEF8}   
\newcommand{\obs}[1]{\textcolor{purple}{\textbf{Observation~\ding{#1}.}}}
\newcommand{\takeaway}[1]{\textcolor{purple}{\textbf{Takeaway.}} \textbf{#1}}
\newtcolorbox{corebox}{
  colback=paleaccent, colframe=purple!70!black,
  boxrule=0.45pt, arc=1.5pt,
  left=5pt, right=5pt, top=3pt, bottom=3pt,
  breakable
}

\theoremstyle{plain}
\newtheorem{proposition}{Proposition}
\newtheorem{propositionf}{Proposition}[section]   
\theoremstyle{remark}
\newtheorem{remark}{Remark}
\newcommand{\ind}[1]{\mathbf{1}\!\left[#1\right]}

\newtcolorbox{questionbox}{
  colback=questionbg, colframe=questionframe,
  boxrule=0.4pt, arc=1pt,
  left=6pt, right=6pt, top=4pt, bottom=4pt,
  fonttitle=\bfseries, coltitle=black,
  breakable
}

\newtcolorbox{insightbox}{
  colback=insightbg, colframe=insightframe,
  boxrule=0.4pt, arc=1pt,
  left=6pt, right=6pt, top=4pt, bottom=4pt,
  breakable
}

\title{Not All Test-Time Compute Is Evidence:\\Model-Specific Bias in LLM Reasoning}

\author{
Ming Shen$^{1,2}$ \quad Tianjun Yao$^{2}$ \\
\textnormal{$^{1}$Shanghai Jiao Tong University}\\
\textnormal{$^{2}$Shenzhen Loop Area Institute (SLAI)}
}

\iclrfinalcopy

\begin{document}

\maketitle
\lhead{}

\begin{abstract}
Test-time scaling methods implicitly assume that additional computation
from the same model provides independent evidence. We show that this
assumption fails within a single reasoning model: errors are correlated
across repeated samples, across reasoning depth, and in the model's own
self-evaluation, so more compute from the same model is not more evidence.
We introduce \ours{} (\oursfull), which treats disagreement between an
intermediate-stage and the final-stage prediction as a per-question signal
of reasoning instability and resolves it by asking a small panel of
heterogeneous models to solve the question independently, without seeing
either candidate, and aggregating their answers directly. Across seven
benchmarks and two base models, \ours{} outperforms full chain-of-thought
in 12 of 14 settings, including a \textbf{7.00\%} gain on MuSiQue, while
activating on only ${\sim}$\textbf{17\%} of examples and using up to
\textbf{47.2\% fewer tokens} than a multi-agent baseline. Ablations show
that both independent solving and solver heterogeneity are necessary for
these gains.
\end{abstract}


\section{Introduction}
\label{sec:introduction}

Test-time scaling has become a central strategy for improving reasoning in
large language models: chain-of-thought
prompting~\citep{wei2022chain}, repeated sampling with majority
vote~\citep{wang2023selfconsistency}, adaptive early
exit~\citep{ICLR2026_8df90a14, fu2024certaindex, dai2025sgrpo}, and
multi-agent
reasoning~\citep{wu2023autogen, li2023camel, li2024moreagents} all convert additional
inference-time computation into accuracy
gains~\citep{snell2025scaling, deepseekai2025deepseekr1}.
Test-time scaling is particularly effective when additional computation contributes
evidence that is sufficiently complementary to the model's current
belief, allowing errors to be corrected rather than repeatedly
reinforced.
However, computation from the same model can remain shaped by persistent
\textcolor{purple}{\textbf{model-specific error bias}}, limiting the
diversity of the evidence it provides.
We observe three manifestations of this phenomenon across generation,
reasoning, and self-evaluation
(Figure~\pcref{fig:intro-composite}).

\begin{figure*}[t]
    \centering
    \includegraphics[width=\textwidth]{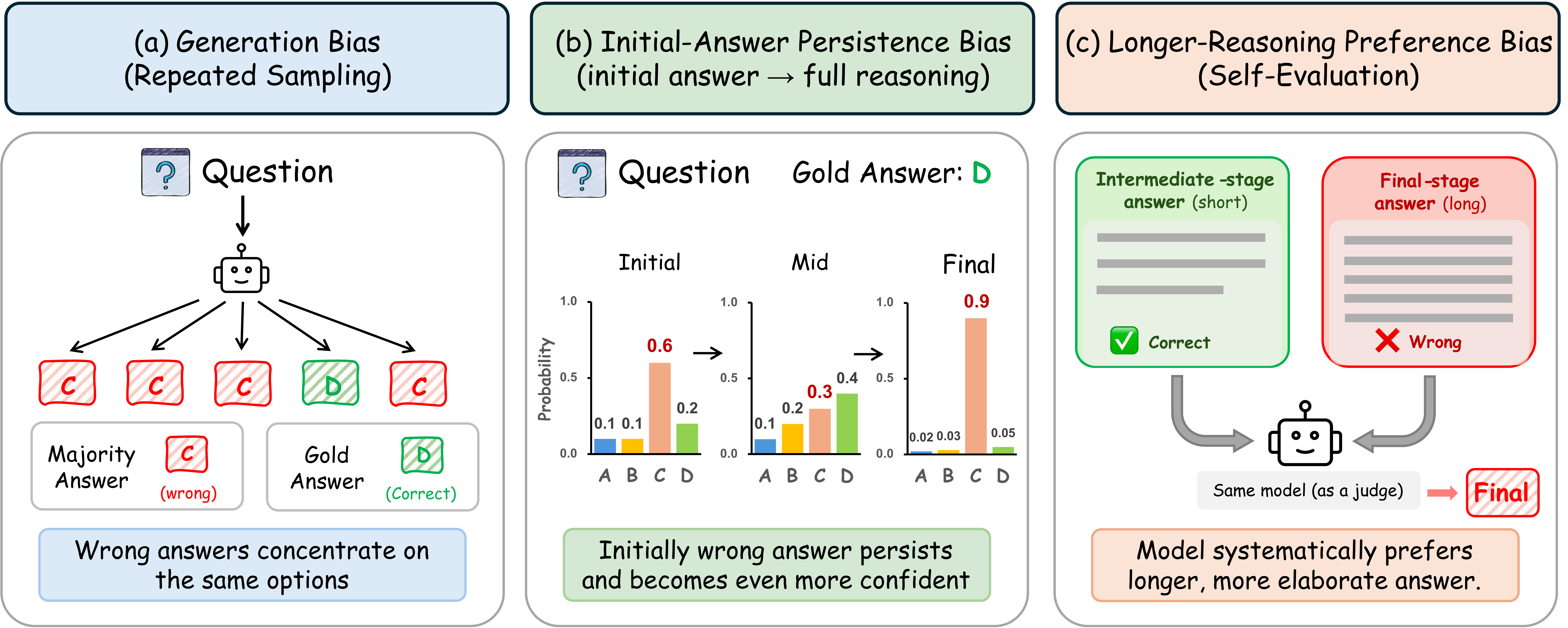}
    \caption{Three manifestations of
    \textcolor{purple}{\textbf{model-specific error bias}} within a
    single reasoning model.
    \tb{(a)}~When repeated sampling produces an incorrect majority
    answer, erroneous samples concentrate on the same wrong answer
    rather than spreading across alternatives.
    \tb{(b)}~When a model's initial answer is incorrect, reasoning
    frequently preserves the same wrong answer rather than correcting
    it.
    \tb{(c)}~When answers at different reasoning stages disagree, the
    model's self-evaluation systematically favors the later, more
    elaborate outcome, even when the earlier answer is correct.
    Detailed analyses are in Section~\pcref{sec:motivation}.}
    \label{fig:intro-composite}
\end{figure*}

\tb{(a)~Generation bias under repeated sampling.}
Repeated sampling can diversify reasoning traces without necessarily
diversifying their failures.
When self-consistency produces an incorrect majority answer, erroneous
samples frequently concentrate on the same dominant wrong answer rather
than spreading across alternatives, indicating a persistent generation
bias rather than independent error modes
(Section~\pcref{sec:motivation-concentration}).

\tb{(b)~Initial-answer persistence bias.}
Reasoning does not necessarily provide evidence independent of a model's
initial judgment.
When a model is first asked to answer directly and then asked to reason
before answering, its post-reasoning prediction can remain strongly
associated with the initial answer; in particular, an initially
incorrect prediction may persist through subsequent reasoning rather
than being corrected (Section~\pcref{sec:motivation-persistence}).
Prior work similarly suggests that reasoning models can form strong
preliminary answer
preferences~\citep{dang2025internal}, that extended reasoning can lead
to overthinking~\citep{zhou2026whenmorethinkinghurts}, and that early
errors may propagate through subsequent
reasoning~\citep{zhang2023language}.

\tb{(c)~Longer-reasoning preference bias.}
A related bias appears when the model is asked to evaluate competing
answers produced at different stages of its own reasoning.
Rather than judging the candidates symmetrically, the model strongly
favors the later, more elaborate reasoning outcome, even when the
earlier answer is correct.
In our experiments, the model selects the later-stage answer in over
95\% of disagreements and remains highly confident even when this
preference leads to an incorrect decision
(Section~\pcref{sec:motivation-self-selection}).
This behavior is related to previously documented length and verbosity
biases in LLM-based
evaluation~\citep{zheng2023judging, hu2025explaining}.

Together, these observations suggest that:
\begin{center}
\vspace{-\topsep}
\fcolorbox{purple}{white}{\parbox{0.93\linewidth}{%
model-specific error bias persists across
\tb{generation}, \tb{reasoning}, and \tb{self-evaluation}, that is, the three
stages of same-model computation that test-time scaling methods rely
on. More compute from the same model is therefore not more evidence.}}
\vspace{-\topsep}
\end{center}

These findings suggest that useful additional computation should provide
evidence that is both \emph{selectively acquired} and
\emph{complementary} to the base model's existing reasoning.

\tb{(1)~When to seek additional evidence.}
Prior work shows that intermediate predictions and answer-stability
signals can reveal how reasoning evolves and support adaptive
computation~\citep{ICLR2026_8df90a14, fu2024certaindex, dai2025sgrpo},
while extended reasoning is not always
beneficial~\citep{zhou2026whenmorethinkinghurts}.
Motivated by these observations, we use disagreement between an
intermediate prediction and the final prediction as a simple
per-question signal of reasoning instability.

\tb{(2)~Where the evidence should come from.}
For such cases, we seek complementary evidence from heterogeneous LLMs.
Prior ensemble studies show that different models exhibit complementary
strengths and can succeed on different subsets of
problems~\citep{jiang2023llm, wang2024moa}, suggesting that model
heterogeneity can reduce dependence on the particular error tendencies
of the base model.

\tb{(3)~How the evidence should be combined.}
We let these solvers reason independently, without seeing the base model's
candidate predictions, and aggregate their answers directly, consistent with
evidence that voting-based ensembles can be competitive with more elaborate
multi-agent protocols~\citep{li2024moreagents, smit2023should}.

These three principles lead to \ours{} (\oursfull): agreement between the intermediate and final
predictions is returned directly, whereas such reasoning-stage
disagreement activates a
small heterogeneous solver panel for candidate-blind, answer-level
arbitration.

Across seven benchmarks and two base models (Qwen3-4B and Qwen3-8B;
\citealp{yang2025qwen3technicalreport}),
\ours{} outperforms full chain-of-thought in 12 of 14 settings,
including an improvement from 59.0\% to 66.0\% on
MuSiQue~\citep{trivedi2022musique}, while
activating on only ${\sim}$17\% of examples and using up to
47.2\% fewer tokens than a fully-connected multi-agent baseline that
invokes every agent on all questions.  A multi-agent
baseline that routes independent agent outputs through a centralized judge
underperforms direct answer aggregation, e.g., 60.67\% vs.\ 66.00\% on
MuSiQue with Qwen3-4B, highlighting the importance of the aggregation
protocol.

\smallskip\noindent
\tb{Contributions.}
\begin{enumerate}[leftmargin=1.8em, itemsep=2pt]
    \item[\textcolor{purple}{\ding{172}}]
    \tb{Three manifestations of model-specific error bias.}
    We identify systematic error persistence across three stages of
    same-model computation: repeated sampling can reproduce a shared
    wrong answer, reasoning can preserve an initially incorrect
    prediction, and self-evaluation can systematically favor a later and
    more elaborate reasoning outcome
    (Section~\pcref{sec:motivation}).

    \item[\textcolor{purple}{\ding{173}}]
    \tb{Disagreement-triggered independent arbitration.}
    To mitigate model-specific error bias, we propose \ours{}, a
    framework that detects reasoning instability via reasoning-stage
    disagreement and resolves it through candidate-blind, answer-level
    arbitration by heterogeneous external solvers
    (Section~\pcref{sec:method}).

    \item[\textcolor{purple}{\ding{174}}]
    \tb{Comprehensive evaluation.}
    Across seven benchmarks and two base models, \ours{} improves
    over full chain-of-thought in 12 of 14 settings while activating on
    only ${\sim}$17\% of examples and using up to 47.2\% fewer tokens
    than the multi-agent baseline
    (Section~\pcref{sec:experiments}).
\end{enumerate}

\section{Motivation}
\label{sec:motivation}

\subsection{Generation Bias under Repeated Sampling}
\label{sec:motivation-concentration}

Repeated sampling from a reasoning model can produce varied reasoning traces
while sharing the same failure pattern. We quantify this effect on questions
where majority-vote self-consistency produces an incorrect answer. For each
failed question $q$, we denote by $C_q$ the fraction of incorrect samples
assigned to the most frequent wrong answer.

\begin{figure}[t]
    \centering
    \begin{minipage}[t]{0.64\textwidth}
        \centering
        \vspace{0pt}
        \includegraphics[width=\linewidth]{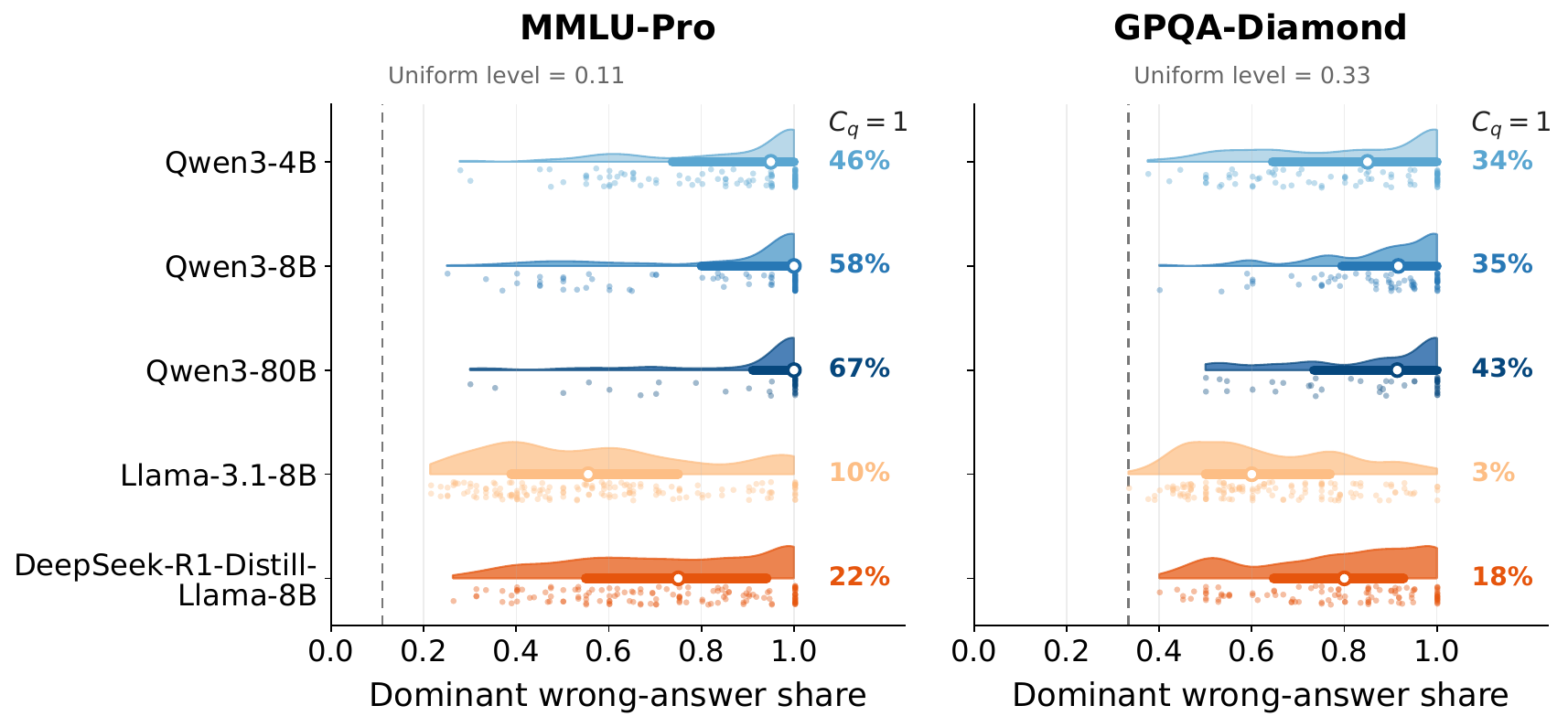}
    \end{minipage}\hfill
    \begin{minipage}[t]{0.34\textwidth}
        \centering
        \vspace{0pt}
        \footnotesize
        \setlength{\tabcolsep}{1.8pt}
        \begin{tabular}{@{}lcc@{}}
            \toprule
            Dataset & \shortstack{Initially\\Wrong} & \shortstack{Error\\Retained} \\
            \midrule
            \multicolumn{3}{l}{\textit{Qwen3-8B}} \\
            MMLU-Pro    & 52.29\% & 26.23\% \\
            MuSR        & 42.00\% & 61.90\% \\
            DetectBench & 43.53\% & 35.54\% \\
            \addlinespace[2pt]
            \multicolumn{3}{l}{\textit{\shortstack[l]{Qwen3-Next-80B-\\A3B-Instruct}}} \\
            MMLU-Pro    & 41.43\% & 32.41\% \\
            MuSR        & 37.50\% & 58.67\% \\
            DetectBench & 39.07\% & 34.86\% \\
            \bottomrule
        \end{tabular}
    \end{minipage}
    \caption{Generation bias and initial-answer persistence bias. \textbf{Left:} Wrong-answer concentration under SC@20 failures. Each point denotes a failed question and reports its dominant wrong-answer share $C_q$; dashed lines mark the uniform-choice baseline. \textbf{Right:} Initial-answer persistence across base-model scales. Error Retained denotes the fraction of initially incorrect answers for which full CoT returns the same incorrect answer.}
    \label{fig:wrong-answer-concentration}
\end{figure}

Figure~\pcref{fig:wrong-answer-concentration} shows the distribution of $C_q$
under SC@20 on MMLU-Pro and GPQA-Diamond. Across models and datasets, the
wrong answers are more concentrated than the uniform-choice baseline. For
many failed questions, all incorrect samples converge on a single answer
($C_q=1$).

Figure~\pcref{fig:wrong-answer-concentration} shows three patterns. First, when
repeated samples from the same model are wrong, they often collapse onto a
single answer rather than spread across alternatives. Second, error
concentration increases with model scale. Third, concentration is higher in
the reasoning-oriented models. \takeaway{Increasing the sampling budget of
the same model may repeatedly reinforce an existing error mode rather than
introduce independent evidence.}

\subsection{Initial-Answer Persistence Bias}
\label{sec:motivation-persistence}

Reasoning is expected to provide additional evidence beyond a model's initial judgment. However, a model may already exhibit a strong answer preference before any explicit reasoning, and the same preference can remain after full reasoning. To examine this initial-answer persistence, we first ask the model to produce a direct answer without reasoning and ask the same model to solve the same question with full chain-of-thought reasoning. We measure whether the final answer retains the initial answer, particularly when the initial answer is incorrect.

The right panel of Figure~\pcref{fig:wrong-answer-concentration} shows that full
CoT often reproduces the model's initially incorrect answer rather than
independently reassessing the question. Same-error persistence is strongest on
MuSR and remains visible across model scales and datasets. \takeaway{Additional reasoning
can elaborate an existing error without supplying the independent
evidence needed to overturn it.}

\subsection{Longer-Reasoning Preference Bias}
\label{sec:motivation-self-selection}

A model's prediction can change as reasoning proceeds. To capture this change, we compare an \emph{intermediate-stage prediction}, obtained at an intermediate reasoning stage where the model is sufficiently confident, with the \emph{final-stage prediction} produced after the model completes its reasoning. When the two disagree, which we call a \emph{reasoning-stage disagreement}, one simple way to resolve the conflict is to let the model select between its own two predictions. This choice may not be neutral: the model may systematically favor the final-stage prediction simply because it follows more reasoning. We examine this bias with Qwen3-8B on reasoning-stage disagreements. Candidates are anonymized and their presentation order is balanced.

\begin{table}[t]
    \centering
    \caption{Self-selection on reasoning-stage disagreements with Qwen3-8B. We report preference for the final-stage prediction, overall accuracy when exactly one candidate is correct, directional accuracy conditioned on which stage is correct, and confidence on incorrect selections. Wrong-choice Conf. is the mean softmax probability assigned to the chosen answer token when the selection is incorrect.}
    \label{tab:self-selection}
    \resizebox{\textwidth}{!}{%
    \begin{tabular}{lrrrrr}
        \toprule
        Dataset & Final-stage Pref. & Self-select Acc.$\uparrow$ & Intermediate$\checkmark$ / Final$\times\uparrow$ & Intermediate$\times$ / Final$\checkmark\uparrow$ & Wrong-choice Conf.$\downarrow$ \\
        \midrule
        MuSR     & 97.7\% & 44.1\% & 0.0\% & 93.8\% & 100.0\% \\
        MMLU-Pro & 95.1\% & 63.4\% & 6.7\% & 96.2\% & 100.0\% \\
        \bottomrule
    \end{tabular}%
    }
\end{table}

Table~\pcref{tab:self-selection} reveals a strong directional asymmetry in self-selection. On reasoning-stage disagreements, the model selects the final-stage prediction in more than 95\% of cases, and its selections are reliable mainly when that prediction is correct. When the intermediate-stage prediction is correct but the final-stage prediction is incorrect, the model rarely selects the earlier prediction. This preference persists with high confidence even when it leads to an incorrect choice. \takeaway{These results reveal a longer-reasoning preference bias: the model tends to trust the later, more elaborate reasoning outcome even when further reasoning has moved away from the correct answer.}

\section{Method}
\label{sec:method}

\paragraph{Framework.}
We propose \ours{} (Disagreement-triggered Independent Arbitration) to
resolve disagreements between predictions produced at two reasoning depths
(Figure~\pcref{fig:method-overview}). Given a question $q$, the base model
yields a \emph{final-stage prediction} after completing its reasoning and,
whenever a confidence criterion is met at an intermediate reasoning stage,
an \emph{intermediate-stage prediction} extracted at that point. For compact
notation, we denote the intermediate-stage and final-stage predictions by
$a_E(q)$ and $a_F(q)$, respectively, and let $m_E(q)\in\{0,1\}$ indicate
whether an intermediate-stage prediction is available. \ours{} takes
these predictions as fixed inputs; the extraction procedure for $a_E(q)$ is
described in Section~\pcref{sec:experimental-setup}.

\begin{figure}[t]
    \centering
    \includegraphics[width=0.95\textwidth]{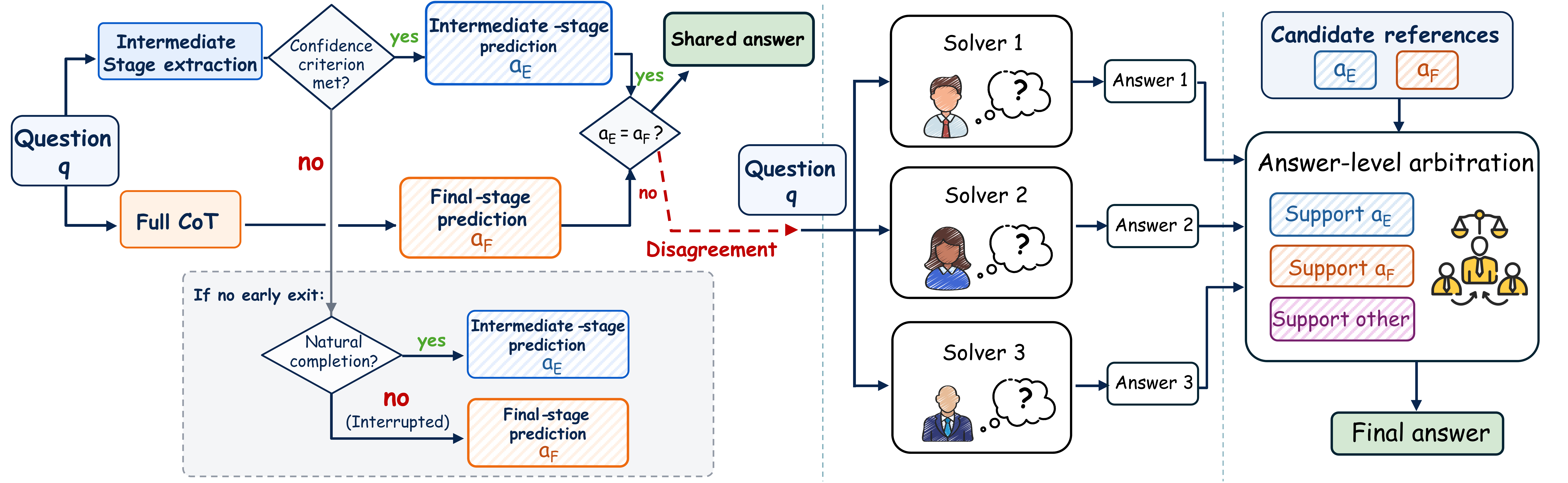}
    \setlength{\abovecaptionskip}{4pt}
    \caption{Overview of \ours. The base model produces an
    intermediate-stage prediction $a_E$ and a final-stage prediction $a_F$.
    Agreement returns the shared answer directly; disagreement activates
    three heterogeneous solvers that solve the original question
    independently behind an information firewall. Answer-level arbitration
    aggregates support for $a_E$, $a_F$, or a third answer.}
    \label{fig:method-overview}
\end{figure}

\paragraph{Disagreement-triggered routing.}
If no intermediate-stage prediction is available ($m_E(q)=0$), we return
$a_F(q)$. If $a_E(q)=a_F(q)$, we return the shared answer. External solvers
are called only on reasoning-stage disagreement, that is, when $a_E(q)$ is
available and differs from $a_F(q)$. We denote this routing decision by
\[
z(q)=
\begin{cases}
1, & \text{if } m_E(q)=1 \text{ and } a_E(q)\ne a_F(q),\\
0, & \text{otherwise}.
\end{cases}
\]

\paragraph{Candidate-blind independent solving.}
When $z(q)=1$, we ask three heterogeneous solvers to solve the original
question. Each solver receives only the question and its answer options and
answers the question independently. No solver sees $a_E(q)$, $a_F(q)$, the
reasoning traces behind them, or another solver's output. Each valid response is
normalized to a task-specific answer representation $v_k$.

\paragraph{Answer-level arbitration.}
Let $\mathcal{V}$ denote the indices of solvers with valid responses. We count
their support for $a_E$ and $a_F$:
\[
n_E=\sum_{k\in\mathcal{V}}\mathbf{1}[v_k=a_E],\qquad
n_F=\sum_{k\in\mathcal{V}}\mathbf{1}[v_k=a_F].
\]
If at least two solvers agree on an answer outside $\{a_E,a_F\}$, we return
that answer. Otherwise, we return $a_E$ when $n_E>n_F$ and $a_F$ when
$n_F>n_E$.

\subsection{Theoretical Analysis}
\label{sec:theory}

We explain why heterogeneous solvers and candidate-blind solving help through
a single quantity, the \emph{truth margin}. Formal statements and proofs are
in Appendix~\pcref{app:proofs}.

\paragraph{Setup.}
For a question with gold answer $y^\star$ and a panel $S$ of $n$ solvers with
average accuracy $\bar p_S$, let $\kappa_S\in[0,1]$ be the fraction of the
panel's expected wrong votes that fall on its single most popular wrong
answer, and define
\begin{equation}
\label{eq:margin}
\Delta_S=\bar p_S-\kappa_S\,(1-\bar p_S).
\end{equation}
$\Delta_S$ is the expected vote share of $y^\star$ minus that of the
strongest wrong answer. The definition encodes an asymmetry: correct votes always coincide on $y^\star$, whereas wrong votes
defeat the truth only if they also coincide on the same wrong answer.

\begin{proposition}[informal; Proposition~\pcref{propf:margin}]
\label{prop:margin}
Under independent voting, plurality is governed by the truth margin
$\Delta_S$, the expected vote share of the gold answer relative to the
strongest wrong alternative. If $\Delta_S>0$, increasing the number of
independent votes concentrates the outcome on the gold answer, exponentially
fast in $n$; if $\Delta_S<0$, additional votes instead reinforce the dominant
error mode.
\end{proposition}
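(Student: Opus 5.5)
The plan is to formalize the voting model and then reduce plurality correctness to a family of pairwise comparisons, each handled by a Hoeffding bound. Let solver $k\in S$ cast a vote $v_k$ independently, with $\Pr[v_k=y^\star]=p_k$ and, for each wrong answer $w\neq y^\star$, $\Pr[v_k=w]=q_{k,w}$. Set $\bar p_S=\frac1n\sum_k p_k$ and $\bar q_w=\frac1n\sum_k q_{k,w}$. Let $w^\dagger$ be the most popular wrong answer, so $\kappa_S(1-\bar p_S)=\max_w \bar q_w=\bar q_{w^\dagger}$. With this notation, $\Delta_S=\bar p_S-\max_w\bar q_w$. This is exactly the expected gold vote share minus the expected share of the strongest wrong answer, which confirms the reading of equation~\eqref{eq:margin}.

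For the positive case, plurality fails to return $y^\star$ (counting ties as failures) only if some wrong $w$ satisfies $N_w\ge N_{y^\star}$, where $N_a$ is the number of votes for $a$. For a fixed $w$, write $N_{y^\star}-N_w=\sum_k X_k$ with $X_k=\mathbf{1}[v_k=y^\star]-\mathbf{1}[v_k=w]\in[-1,1]$. These are independent with mean $\sum_k (p_k-q_{k,w})=n(\bar p_S-\bar q_w)\ge n\Delta_S$. Hoeffding's inequality then gives $\Pr[N_w\ge N_{y^\star}]\le \exp(-n\Delta_S^2/2)$. A union bound over wrong answers finishes the case. For a finite answer set of size $m$ (multiple choice), $\Pr[\text{plurality}\neq y^\star]\le (m-1)e^{-n\Delta_S^2/2}$, which is exponentially small in $n$.

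For the negative case, if $\Delta_S<0$ then $\mathbb{E}[N_{w^\dagger}-N_{y^\star}]=-n\Delta_S>0$. The same Hoeffding argument gives $\Pr[N_{y^\star}\ge N_{w^\dagger}]\le e^{-n\Delta_S^2/2}$, so the gold answer loses plurality with probability tending to one. I would state this as the formal meaning of "reinforcing the dominant error mode". I would not claim that $w^\dagger$ itself wins, since another wrong answer could be comparably popular; I would add that claim only if $w^\dagger$ is the unique maximizer with a gap, using the same bound.

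The main obstacle is pinning down the regime of "increasing $n$". The solver statistics ($\bar p_S$, $\kappa_S$) must be held fixed, either by i.i.d.\ replication of a fixed panel or by assuming the uniform bound $\bar p_S-\bar q_w\ge\Delta_S$ for every $n$. A second issue is keeping the answer space finite for the union bound. For open-ended answers I would instead bound the event by $\{N_{y^\star}\le n(\bar p_S-\Delta_S/2)\}\cup\{\exists w: N_w\ge n(\bar p_S-\Delta_S/2)\}$. I would then control the second event by noting that wrong answers whose expected share is at most $\bar p_S-\Delta_S$ are few in effect, via a multiplicative Chernoff bound summed over answers. If that gets messy, I would simply assume a finite answer set, which matches the benchmarks.
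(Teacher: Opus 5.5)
Your proposal is correct and is essentially the paper's proof. Both use the pairwise reduction with $X_k=\mathbf{1}[v_k=y^\star]-\mathbf{1}[v_k=w]\in[-1,1]$, Hoeffding with range $2$ giving $e^{-n\Delta_w^2/2}\le e^{-n\Delta_S^2/2}$ plus a union bound over the $K-1$ wrong answers when $\Delta_S>0$, and the one-sided bound when $\Delta_S<0$. The differences are minor: the paper also records the linearity fact that $y^\star$ uniquely maximizes $\mathbb{E}[N_a]$ iff $\Delta_S>0$, states the negative case for any $w$ with $\Delta_w<0$ rather than only the most popular wrong answer, and simply assumes a finite answer set $|\mathcal{A}|=K$, so your open-ended detour is not needed.
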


Repeated samples from the same model preserve the answer distribution and
therefore the sign of $\Delta_S$: for a homogeneous panel, $\bar p_S$ is the
model's own accuracy and $\kappa_S$ is exactly the wrong-answer concentration
$C_q$ of Section~\pcref{sec:motivation-concentration}, and neither depends on
the sampling budget. Only a change of solver population can change the sign of
the margin, so the benefit of heterogeneity is not diversity per se but a
reduction of shared error modes; Section~\pcref{sec:ablation-homogeneous}
tests this prediction.

\begin{proposition}[informal; Proposition~\pcref{propf:exposure}]
\label{prop:exposure}
Candidate exposure systematically shifts a solver's answer distribution
toward the exposed candidate. If the exposed candidate is correct, this shift
increases the truth margin. If it is incorrect, the same shift decreases the
margin against that wrong answer and can reverse its sign. Candidate-blind
solving removes this pre-aggregation shift.
\end{proposition}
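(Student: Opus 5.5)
We formalize candidate exposure as a mixture perturbation of each solver's answer distribution and then track how the truth margin $\Delta_S$ of equation~\ref{eq:margin} responds. For solver $k$ with blind answer distribution $\pi_k(\cdot)$ over the answer set, exposure to a candidate $c$ yields $\pi_k^{c}=(1-\lambda_k)\pi_k+\lambda_k\delta_c$ with $\lambda_k\in[0,1]$. This is the simplest model of a systematic shift toward the exposed candidate, and the argument extends to any perturbation that moves mass onto $c$ and proportionally off the remaining answers. The panel-level quantities are averages over $k$. The blind accuracy is $\bar p_S=\frac1n\sum_k\pi_k(y^\star)$. For each answer $w$, the expected wrong-vote share is $q(w)=\frac1n\sum_k\pi_k(w)$ for $w\ne y^\star$. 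We rewrite the margin as $\Delta_S=\bar p_S-\max_{w\ne y^\star}q(w)$, which matches $\kappa_S(1-\bar p_S)=\max_w q(w)$. After exposure, with $\bar\lambda=\frac1n\sum_k\lambda_k$ and allowing $\lambda_k$ to vary across solvers, the shares are $p^c(y^\star)=\frac1n\sum_k[(1-\lambda_k)\pi_k(y^\star)+\lambda_k\1[c=y^\star]]$ and $q^c(w)$ defined similarly.

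\textbf{Case $c=y^\star$.} Here $p^c(y^\star)-\bar p_S=\frac1n\sum_k\lambda_k(1-\pi_k(y^\star))\ge0$. Every wrong share is scaled down, $q^c(w)=\frac1n\sum_k(1-\lambda_k)\pi_k(w)\le q(w)$, so the maximum wrong share does not increase. Hence $\Delta^c_S\ge\Delta_S$. The inequality is strict whenever some solver with $\lambda_k>0$ is not already certain of $y^\star$.

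\textbf{Case $c=w_0\ne y^\star$.} The gold share drops to $\frac1n\sum_k(1-\lambda_k)\pi_k(y^\star)$, and $q^c(w_0)=q(w_0)+\frac1n\sum_k\lambda_k(1-\pi_k(w_0))$. The margin against $w_0$ specifically, $D(w_0)=p(y^\star)-q(w_0)$, therefore falls by $\frac1n\sum_k\lambda_k(1-\pi_k(w_0)+\pi_k(y^\star))\ge\frac1n\sum_k\lambda_k\pi_k(y^\star)\cdot 0+\dots>0$ whenever $\bar\lambda>0$ and $\pi_k(w_0)<1$. Since $\Delta^c_S\le D^c(w_0)$, the overall margin decreases at least this much.

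For sign reversal, take the uniform case $\lambda_k=\lambda$. Then $D^c(w_0)=(1-\lambda)D(w_0)-\lambda$, which is negative iff $\lambda>D(w_0)/(1+D(w_0))$. This gives an explicit threshold exposure strength. Combined with Proposition~\ref{prop:margin}, a negative margin means that plurality over independent exposed votes converges to $w_0$ exponentially fast in $n$, which is the content of ``can reverse its sign.''

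\textbf{Blind solving.} The final claim is by construction. Under candidate-blind solving, each $\lambda_k=0$, so the panel's distribution and margin are exactly the unexposed $\Delta_S$, with no shift toward $a_E$ or $a_F$ before aggregation.

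\textbf{Main obstacle.} The main difficulty is making ``systematically shifts toward the exposed candidate'' formal without making the result trivial. The mixture model above is one choice. A more general alternative is to assume only the monotonicity conditions $\pi^c_k(c)\ge\pi_k(c)$ and $\pi^c_k(w)\le\pi_k(w)$ for $w\ne c$, and the proofs above go through with these inequalities alone. I would state the general version and use the mixture model for the quantitative reversal threshold. A second subtlety is that the maximizing wrong answer may change after exposure. Bounding $\Delta^c_S$ by the margin against $w_0$ avoids having to track the argmax.
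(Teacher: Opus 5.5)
Your argument is essentially the paper's. The paper also models exposure as the mixture $(1-\gamma)P_i(\cdot\mid q)+\gamma\,\delta_a$, with a common strength $\gamma\in\{\gamma_c,\gamma_w\}$ depending on whether $a=y^\star$. From this it derives the single identity $\Delta'_w=(1-\gamma)\Delta_w+\gamma\bigl(\ind{a=y^\star}-\ind{a=w}\bigr)$ for every $w\neq y^\star$, and reads off the same threshold $\gamma_w>\Delta_{w_0}/(1+\Delta_{w_0})$ and the blind case $\gamma=0$. Your solver-specific $\lambda_k$ and your monotone-only variant are harmless generalizations of the qualitative parts. The cost is that you lose that one-line identity, which the paper also uses, by linearity in $a$, for its random-source part (iii).

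One step is false, although the statement does not need it. From $\Delta^c_S\le D^c(w_0)=D(w_0)-\delta$ you conclude that the overall margin decreases by at least $\delta$. But $D(w_0)\ge\Delta_S$, with equality only if $w_0$ was already the strongest wrong answer. Moreover, in the uniform case the other margins become $(1-\lambda)D(w)$, which \emph{increase} when they are negative. For instance, blind shares $\bar p_S=0.3$, $q(w_0)=0.2$, $q(w_1)=0.4$, $q(w_2)=0.1$ with $\lambda=0.05$ give $\Delta_S=-0.1$ but $\Delta^c_S=0.285-0.38=-0.095$. So claim only what the proposition asserts: the margin against $w_0$ decreases, and $\Delta_S$ itself decreases when $w_0$ is the pre-exposure argmin.

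Likewise, ``plurality converges to $w_0$'' overstates what Proposition~\ref{propf:margin}(iii) gives. That result only says $\Pr[N_{y^\star}>N_{w_0}]\le\exp(-n\,D^c(w_0)^2/2)$, i.e.\ the gold answer loses. With $\lambda=0.1$ in the same example the sign against $w_0$ flips, yet the exposed plurality goes to $w_1$. Convergence to $w_0$ itself does hold if $\Delta_S\ge 0$ before exposure, since then $q^c(w)=(1-\lambda)q(w)\le p^c(y^\star)<q^c(w_0)$ for all $w\neq w_0$.

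Finally, replace the garbled bound ``$\cdot 0+\dots$'' with the exact condition: the drop $\frac1n\sum_k\lambda_k\bigl(1-\pi_k(w_0)+\pi_k(y^\star)\bigr)$ is positive if and only if some solver with $\lambda_k>0$ has $\pi_k(w_0)<1$.
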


Exposure amplifies the candidate being shown, not necessarily the truth, so
communication helps only when solvers can reliably distinguish correct from
incorrect shared answers, which is exactly what
Section~\pcref{sec:motivation-self-selection} calls into question. \ours{}
therefore obtains each solver's judgment before revealing any candidate, so
the evidence entering aggregation is not conditioned on the alternatives being
arbitrated.

\begin{corebox}
\centering \ours{} pipeline:
reasoning-stage disagreement $\rightarrow$ heterogeneous external evidence
$\rightarrow$ candidate-blind independent solving $\rightarrow$ direct answer
aggregation.
\end{corebox}

\section{Experiments}
\label{sec:experiments}

\subsection{Experimental Setup}
\label{sec:experimental-setup}

\paragraph{Datasets.}
We evaluate on seven benchmarks across four reasoning domains. MMLU-Pro
\citep{wang2024mmlupro} and GPQA-Diamond \citep{rein2023gpqa} cover
knowledge-intensive reasoning. DetectBench \citep{gu2024detectbench}, MuSiQue
\citep{trivedi2022musique}, and 2WikiMultiHopQA \citep{ho2020twiki} cover
contextual and multi-hop reasoning. We use MATH
\citep{hendrycks2021math}, restricted to problems of difficulty levels 3 to 5, for mathematical reasoning and MuSR
\citep{sprague2024musr} for narrative or soft reasoning. Dataset statistics
and sampling details are provided in Appendix~\pcref{app:dataset-statistics}.

\paragraph{Baselines.}
We compare \ours{} with seven baselines, grouped by the kind of
intervention each applies against model-specific error bias.
\mbox{\tb{\ding{172} Standard reasoning.}}
CoT \citep{wei2022chain} lets the base model complete its reasoning and returns
the final prediction.
\mbox{\tb{\ding{173} Adaptive reasoning.}}
DEER \citep{ICLR2026_8df90a14} and Dynasor \citep{fu2024certaindex} use
intermediate reasoning signals to decide whether continued reasoning is
necessary: DEER relies on confidence at intermediate reasoning states, whereas
Dynasor, in its CoT setting, uses the stability of intermediate predictions.
Like \ours{}, both act on signals exposed before reasoning terminates, but
they modify the computation of the same model rather than seeking evidence
outside it.
\mbox{\tb{\ding{174} Multi-agent inference.}}
We consider two protocols using the same four-agent pool.
\textbf{MAS-Exchange} \citep{wu2023autogen, li2023camel} uses fully-connected
communication: agents first solve the question independently, observe all
peer responses, revise once, and then vote. \textbf{MAS-Judge}
\citep{wang2024moa} removes peer-to-peer communication: agents solve
independently, after which a centralized judge, instantiated by the base
model, observes all responses and selects the final answer.
\mbox{\tb{\ding{175} Learning-based debiasing.}}
We test whether model-specific error bias can be reduced through parameter
updates: SFT trains the base model on its own correct samples and DPO then
prefers correct over incorrect samples of the same question (details in
Appendix~\pcref{app:sft-dpo}).

\paragraph{Implementation.}
We use Qwen3-4B and Qwen3-8B as base models. The external
solvers are DeepSeek-v4-Flash \citep{deepseekai2026deepseekv4},
Llama-3.3-70B-Instruct \citep{llama33}, and GPT-4o-mini \citep{gpt4o-mini}.
Following \citet{ICLR2026_8df90a14}, we extract the intermediate-stage
prediction at reasoning-transition points when the confidence criterion is
met. The final-stage prediction is obtained by letting the same base model
complete its reasoning under matched inference settings.

\paragraph{Metrics.}
We report answer accuracy in percent, and all accuracy differences are
absolute differences. Token Ratio is the total input and output
token usage normalized by the CoT baseline on the same dataset and model.

\paragraph{Evaluation protocol.}
Answers are normalized to option keys before aggregation. The final selected
answer is scored against the benchmark gold option. For free-form outputs, the
final answer is evaluated by Qwen 3.7-Plus (\texttt{qwen/qwen3.7-plus}), an
independent LLM judge, against the benchmark gold answer.

\subsection{Experimental Results}

\begin{table}[t]
    \centering
    \caption{Main results across model scales. For each dataset, adjacent columns report accuracy (\%) and the total input and output token ratio relative to CoT. Average Acc. and Tok. are the corresponding unweighted arithmetic means across the seven datasets. \textbf{Bold} indicates the highest reported accuracy in each column within each base-model block.}
    \label{tab:main-results}
    \setlength{\tabcolsep}{2.8pt}
    \renewcommand{\arraystretch}{1.30}
    \resizebox{\textwidth}{!}{%
\begin{tabular}{cc@{\hspace{3pt}\vrule width 0.2pt\hspace{3pt}}cc cc@{\hspace{3pt}\vrule width 0.2pt\hspace{3pt}}cc cc cc@{\hspace{3pt}\vrule width 0.2pt\hspace{3pt}}cc@{\hspace{3pt}\vrule width 0.2pt\hspace{3pt}}cc@{\hspace{3pt}\vrule width 0.2pt\hspace{3pt}}cc}
        \toprule
        \multirow{3}{*}{Models} & \multirow{3}{*}{Methods}
        & \multicolumn{4}{c@{\hspace{3pt}\vrule width 0.2pt\hspace{3pt}}}{Knowledge-intensive}
        & \multicolumn{6}{c@{\hspace{3pt}\vrule width 0.2pt\hspace{3pt}}}{Contextual \& Multi-hop}
        & \multicolumn{2}{c@{\hspace{3pt}\vrule width 0.2pt\hspace{3pt}}}{Mathematical}
        & \multicolumn{2}{c@{\hspace{3pt}\vrule width 0.2pt\hspace{3pt}}}{Narrative / Soft}
        & \multicolumn{2}{c}{Average} \\
        \cmidrule(lr){3-6}\cmidrule(lr){7-12}\cmidrule(lr){13-14}\cmidrule(lr){15-16}\cmidrule(lr){17-18}
        & & \multicolumn{2}{c}{MMLU-Pro} & \multicolumn{2}{c@{\hspace{3pt}\vrule width 0.2pt\hspace{3pt}}}{GPQA-Diamond}
        & \multicolumn{2}{c}{DetectBench} & \multicolumn{2}{c}{MuSiQue}
        & \multicolumn{2}{c@{\hspace{3pt}\vrule width 0.2pt\hspace{3pt}}}{2Wiki} & \multicolumn{2}{c@{\hspace{3pt}\vrule width 0.2pt\hspace{3pt}}}{MATH}
        & \multicolumn{2}{c@{\hspace{3pt}\vrule width 0.2pt\hspace{3pt}}}{MuSR}
        & \multicolumn{2}{c}{All Datasets} \\
        \cmidrule(lr){3-4}\cmidrule(lr){5-6}\cmidrule(lr){7-8}\cmidrule(lr){9-10}\cmidrule(lr){11-12}\cmidrule(lr){13-14}\cmidrule(lr){15-16}\cmidrule(lr){17-18}
        & & Acc.$\uparrow$ & Tok.$\downarrow$ & Acc.$\uparrow$ & Tok.$\downarrow$
        & Acc.$\uparrow$ & Tok.$\downarrow$ & Acc.$\uparrow$ & Tok.$\downarrow$
        & Acc.$\uparrow$ & Tok.$\downarrow$ & Acc.$\uparrow$ & Tok.$\downarrow$
        & Acc.$\uparrow$ & Tok.$\downarrow$ & Acc.$\uparrow$ & Tok.$\downarrow$ \\
        \midrule
        \multirow{8}{*}{Qwen3-4B}
        & CoT       & 68.00 & 1.00 & 51.01 & 1.00 & 67.74 & 1.00 & 59.00 & 1.00 & 89.50 & 1.00 & 96.67 & 1.00 & 63.50 & 1.00 & 70.77 & 1.00 \\
        & DEER      & 63.14 & 0.86 & 56.06 & 1.60 & 61.65 & 0.67 & 51.67 & 0.84 & 88.00 & 0.29 & 95.33 & 1.18 & 58.50 & 0.68 & 67.76 & 0.87 \\
        & Dynasor   & 63.71 & 1.68 & 51.01 & 1.76 & 65.95 & 1.30 & 55.00 & 1.16 & 90.50 & 1.21 & 95.67 & 2.00 & 63.50 & 1.51 & 69.33 & 1.52 \\
        & MAS-Exchange & \textbf{70.29} & 4.38 & 59.09 & 2.75 & 67.74 & 4.05 & 61.33 & 5.32 & 90.00 & 3.53 & \textbf{97.00} & 2.50 & 62.50 & 4.04 & 72.56 & 3.80 \\
        & MAS-Judge & 66.86 & 3.59 & 58.59 & 2.69 & 66.67 & 3.30 & 60.67 & 4.34 & 90.00 & 3.04 & 96.33 & 2.38 & 61.00 & 3.39 & 71.45 & 3.25 \\
        & SFT       & 66.29 & 1.27 & -- & -- & 67.38 & 1.53 & 57.33 & 1.36 & 88.50 & 1.16 & 96.00 & 1.30 & \textbf{66.50} & 1.36 & -- & -- \\
        & SFT+DPO   & 66.00 & 1.37 & -- & -- & 67.38 & 1.35 & 59.00 & 1.37 & 88.00 & 1.16 & 95.67 & 1.30 & 63.00 & 1.25 & -- & -- \\
        \rowcolor{paleaccent} & \ours{} & \textbf{70.29} & 3.04 & \textbf{59.60} & 2.63 & \textbf{69.18} & 2.80 & \textbf{66.00} & 2.90 & \textbf{91.00} & 2.38 & 96.67 & 2.21 & 64.00 & 2.78 & \textbf{73.82} & 2.68 \\
        \midrule
        \multirow{8}{*}{Qwen3-8B}
        & CoT       & 71.43 & 1.00 & 59.09 & 1.00 & 68.46 & 1.00 & 64.00 & 1.00 & 86.50 & 1.00 & \textbf{98.00} & 1.00 & 60.00 & 1.00 & 72.50 & 1.00 \\
        & DEER      & 68.00 & 0.79 & 60.10 & 1.61 & 64.87 & 0.62 & 56.67 & 0.82 & 87.50 & 0.29 & 96.33 & 0.84 & 61.00 & 0.74 & 70.64 & 0.82 \\
        & Dynasor   & 67.71 & 1.42 & 58.59 & 1.36 & 67.74 & 0.91 & 60.00 & 1.08 & 86.00 & 1.14 & 95.67 & 1.81 & 57.50 & 1.07 & 70.46 & 1.26 \\
        & MAS-Exchange & \textbf{75.14} & 4.27 & \textbf{61.62} & 2.64 & \textbf{69.53} & 3.83 & 62.67 & 5.47 & 88.50 & 3.57 & \textbf{98.00} & 2.22 & 62.50 & 4.11 & 73.99 & 3.73 \\
        & MAS-Judge & 73.14 & 3.48 & 60.61 & 2.62 & 68.10 & 3.34 & 61.67 & 4.43 & \textbf{89.00} & 3.05 & 97.00 & 2.02 & 61.00 & 3.51 & 72.93 & 3.21 \\
        & SFT       & 68.57 & 1.34 & -- & -- & 68.10 & 1.30 & 63.67 & 1.19 & 88.00 & 1.00 & 97.33 & 1.19 & \textbf{63.50} & 1.34 & -- & -- \\
        & SFT+DPO   & 70.57 & 1.34 & -- & -- & 67.74 & 1.36 & 64.00 & 1.04 & 88.00 & 1.12 & --$^{\dagger}$ & -- & \textbf{63.50} & 1.22 & -- & -- \\
        \rowcolor{paleaccent} & \ours{} & \textbf{75.14} & 2.94 & 61.11 & 2.63 & \textbf{69.53} & 2.70 & \textbf{68.67} & 2.89 & \textbf{89.00} & 2.48 & 97.67 & 1.88 & \textbf{63.50} & 2.84 & \textbf{74.95} & 2.62 \\
        \bottomrule
    \end{tabular}%
    }
    \vspace{2pt}

    \begin{minipage}{\textwidth}
    \footnotesize
    $^{\dagger}$The DPO stage was skipped for MATH with Qwen3-8B because
    insufficient valid preference pairs could be constructed to meet our
    minimum training requirement. Only the SFT result is reported.
    \end{minipage}
\end{table}

\obs{172}\quad
\textbf{\ours{} achieves the highest average accuracy at both model scales.}
Across the seven benchmarks, \ours{} improves average accuracy over CoT by
3.05\% and 2.45\% with Qwen3-4B and Qwen3-8B, respectively, and outperforms
CoT in 12 of 14 settings. The two exceptions are on MATH, where CoT accuracy
is already near ceiling and reasoning-stage disagreement leaves little room
for correction.

\obs{173}\quad
\textbf{Detecting unstable reasoning is not sufficient; the source of
corrective evidence matters.} DEER and Dynasor modify or terminate the base
model's reasoning using intermediate signals, whereas \ours{} uses such signals
to request independent evidence from heterogeneous models. \ours{} achieves
higher accuracy than both adaptive-reasoning baselines in all 14 settings,
although at a higher token cost than early exit.

\obs{174}\quad
\textbf{How multiple models are combined matters as much as having multiple
models.} MAS-Exchange introduces cross-agent conditioning before the final
vote, while MAS-Judge preserves independent generation but introduces a
candidate-aware centralized decision. \ours{} avoids both operations and
directly aggregates independently obtained answers. Compared with
MAS-Exchange, it improves average accuracy by 1.26\% and 0.96\% with Qwen3-4B
and Qwen3-8B while lowering the average token ratio by about 30\% at both
scales, and it also exceeds MAS-Judge in average accuracy with fewer tokens;
Section~\pcref{sec:ablation} examines the underlying mechanism.

\obs{175}\quad
\textbf{Learning-based correction does not reliably remove the error modes.}
SFT and SFT+DPO provide limited and inconsistent gains over CoT despite being
trained on correct and incorrect samples of the base model itself. Gains,
where present, are confined to MuSR and, for Qwen3-8B, 2WikiMultiHopQA; in
the remaining settings both variants match or fall below CoT, and adding DPO
after SFT does not consistently help.

\subsection{Ablation Study}
\label{sec:ablation}

\subsubsection{Ablating Independent Solving with Direct Trace Comparison}
\label{sec:ablation-direct-comparison}

We test whether external models can resolve reasoning-stage disagreements by
comparing the candidate reasoning traces rather than solving the question
independently. We use Qwen3-8B as the base model and retain the three default
external models as reviewers on MuSR and MMLU-Pro. Candidates are anonymized
and their presentation order is balanced. Each reviewer independently selects
one candidate, and the final decision follows a two-of-three majority vote.

\paragraph{Full-trace comparison.}
With complete traces, the ensemble selects the final-stage prediction in over
90\% of disagreements on both datasets (Table~\pcref{tab:direct-comparison}),
and rarely recovers a correct intermediate-stage prediction. The final-stage
preference therefore persists even with heterogeneous reviewers.

\paragraph{Equal-budget summaries.}
To examine the role of presentation, Qwen3-8B summarizes each trace with the
same prompt, structure, and approximately matched token budget. This reduces
the final-stage preference and the directional asymmetry, but selection
accuracy improves only slightly on MuSR and decreases on MMLU-Pro.
\takeaway{These results motivate asking external solvers to answer the
question independently rather than judge competing traces.}

\begin{table}[t]
    \centering
    \caption{Direct trace comparison with full traces or equal-budget summaries (Qwen3-8B base model). Final Pref. is the fraction of reasoning-stage disagreements where the reviewer ensemble selects the final-stage prediction. Sel. Acc. is the correct-selection rate when exactly one candidate is correct. The last two columns report the probability of selecting the intermediate-stage prediction when only it is correct, and selecting the final-stage prediction when only it is correct, respectively.}
    \label{tab:direct-comparison}
    \footnotesize
    \setlength{\tabcolsep}{4pt}
    \begin{tabular}{llrrrr}
        \toprule
        Setting & Dataset & Final Pref. & Sel. Acc.$\uparrow$
            & Inter.$\checkmark$ / Final$\times\uparrow$
            & Inter.$\times$ / Final$\checkmark\uparrow$ \\
        \midrule
        \multirow{2}{*}{Full Trace}
            & MuSR     & 93.0\% & 47.1\% & 5.6\%  & 93.8\% \\
            & MMLU-Pro & 91.8\% & 73.2\% & 26.7\% & 100.0\% \\
        \midrule
        \multirow{2}{*}{Equal Budget}
            & MuSR     & 69.8\% & 50.0\% & 33.3\% & 68.8\% \\
            & MMLU-Pro & 61.7\% & 65.0\% & 57.1\% & 69.2\% \\
        \bottomrule
    \end{tabular}
\end{table}

\subsubsection{Homogeneous Qwen3-8B Reviewers}
\label{sec:ablation-homogeneous}

We replace the heterogeneous solvers with three independently sampled Qwen3-8B
solvers. The base model remains Qwen3-8B, and the arbitration rule is unchanged.
Both settings are evaluated on the same activated reasoning-stage disagreements.

Arbitration accuracy decreases on all three datasets, with drops of more than 30\%
on both 2WikiMultiHopQA and MuSiQue
(Table~\pcref{tab:homogeneous-qwen8b-reviewers}). On these two datasets, every
question answered correctly by the homogeneous panel is also answered correctly
by the heterogeneous panel. This pattern is consistent with the model-specific
error bias discussed in Section~\pcref{sec:motivation}. \takeaway{It supports using
heterogeneous solvers to obtain corrective evidence rather than relying on
repeated samples from the base model.}

\subsubsection{Scaling to Stronger Base Models}

We keep the three heterogeneous solvers fixed and evaluate Qwen3 base models at
8B, 30B, and 80B scales\footnote{The 30B and 80B variants are
Qwen3-30B-A3B-Instruct-2507 and Qwen3-Next-80B-A3B-Instruct, respectively.}
on MuSR and MMLU-Pro. \ours{} outperforms both CoT and
DEER in all six settings (Table~\pcref{tab:base-model-scaling}). The gains persist
at 80B without upgrading the external solvers. \takeaway{Together with the
homogeneous-solver ablation in Section~\pcref{sec:ablation-homogeneous}, these results suggest that
heterogeneous models provide useful complementary predictions even as the base
model becomes stronger.}

\begin{table}[t]
    \centering
    \caption{Solver heterogeneity and base-model scaling. \textbf{Left:} Effect of solver heterogeneity with Qwen3-8B as the base model. Arbitration accuracy is measured on activated reasoning-stage disagreements; the homogeneous panel uses three independently sampled Qwen3-8B solvers. \textbf{Right:} Scaling the base model with a fixed heterogeneous solver panel. We report full-dataset accuracy and activation rate, defined as the fraction of examples that trigger arbitration through reasoning-stage disagreement.}
    \label{tab:homogeneous-qwen8b-reviewers}
    \label{tab:base-model-scaling}
    \begin{minipage}[t]{0.39\textwidth}
        \vspace{0pt}
        \centering
        \scriptsize
        \setlength{\tabcolsep}{1.8pt}
        \begin{tabular}{@{}lrrr@{}}
            \toprule
            Dataset & Hetero. & 3 Qwen3-8B & $\Delta$ \\
            \midrule
            MuSR & 55.81\% & 51.16\% & $-4.65\%$ \\
            2WikiMultiHopQA & 53.85\% & 19.23\% & $-34.62\%$ \\
            MuSiQue & 54.72\% & 24.53\% & $-30.19\%$ \\
            \bottomrule
        \end{tabular}
    \end{minipage}
    \hfill
    \begin{minipage}[t]{0.59\textwidth}
        \vspace{0pt}
        \centering
        \scriptsize
        \setlength{\tabcolsep}{3.2pt}
        \begin{tabular}{@{}llrrrr@{}}
            \toprule
            \multirow{2}{*}{Dataset} & \multirow{2}{*}{Base Model} & \multicolumn{3}{c}{Accuracy (\%) $\uparrow$} & \multirow{2}{*}{Activation (\%)} \\
            \cmidrule(lr){3-5}
            & & CoT & DEER & \ours{} & \\
            \midrule
            \multirow{3}{*}{MuSR}
                & Qwen3-8B  & 60.00 & 61.00 & \textbf{63.50} & 21.50 \\
                & Qwen3-30B & 62.00 & 61.50 & \textbf{66.00} & 24.50 \\
                & Qwen3-80B & 66.00 & 67.50 & \textbf{68.50} & 14.50 \\
            \midrule
            \multirow{3}{*}{MMLU-Pro}
                & Qwen3-8B  & 71.43 & 68.00 & \textbf{75.14} & 17.43 \\
                & Qwen3-30B & 75.14 & 72.86 & \textbf{76.00} & 13.43 \\
                & Qwen3-80B & 76.86 & 75.71 & \textbf{78.00} & 9.14 \\
            \bottomrule
        \end{tabular}
    \end{minipage}
\end{table}

Activation rates decrease with model scale, as larger models form more stable
intermediate-stage predictions, yet \ours{} continues to improve accuracy
while invoking external solvers less often.

\section{Related Work}
\label{sec:related-work}

\paragraph{Systematic error bias in LLM reasoning.}
The errors of a language model are not independent across samples, reasoning
steps, or self-evaluation. At the generation level, models commit to an answer
early and then rationalize it \citep{zhang2023language}, larger models
misfire at the commitment step despite retaining the correct answer
\citep{yeom2026hallucination}, and this commitment is skewed by surface
features and prior beliefs
\citep{zheng2024large, jiang2024peek, dasgupta2022language}. Because such
biases are shared across samples, majority voting over repeated samples
\citep{wang2023selfconsistency} can amplify a shared wrong answer rather than
average it out. At the reasoning
level, chain-of-thought explanations can be unfaithful to the factors that
drive the prediction \citep{turpin2023language}, reasoning models form strong
preliminary answer preferences that trigger overthinking
\citep{dang2025internal, chen2025not}, and longer reasoning does not reliably
improve accuracy \citep{zhou2026whenmorethinkinghurts}. At the evaluation
level, LLM judges exhibit position, verbosity, and self-preference biases
\citep{zheng2023judging, wataoka2024self, ye2025justice, hu2025explaining}.
Prior work studies these biases in isolation, largely on multiple-choice or
short chain-of-thought settings; we show that they co-occur within a single
reasoning model and address them at test time by acquiring evidence from
outside the model.

\paragraph{Adaptive reasoning and early exit.}
Test-time compute can be allocated adaptively \citep{snell2025scaling}, either
by constraining reasoning length \citep{han2025token, xu2025chain,
aggarwal2025l1} or by monitoring intermediate answers and stopping once they
stabilize or become confident \citep{fu2024certaindex, ICLR2026_8df90a14,
dai2025sgrpo}. These methods decide \emph{how much} to compute and trust the
intermediate-stage prediction whenever it is confident, which is where the
preliminary-answer bias above resides. \ours{} uses the same kind of signal
to decide \emph{when} to seek external evidence, and retains both the
intermediate-stage and final-stage predictions.

\paragraph{Multi-agent reasoning and its failure modes.}
Multi-agent methods improve reasoning by letting agents exchange and revise
answers \citep{wu2023autogen, li2023camel, liang2024divergent}, by layering
models on earlier outputs \citep{wang2024moa}, or by sampling more agents and
voting \citep{li2024moreagents}. Recent analyses question whether
communication adds independent evidence: agents conform to a majority,
especially when uncertain, and frequently switch from a correct answer to an
incorrect one to match their peers
\citep{zhu2025conformity, cho2025herd, wynn2025talk}; most of the benefit of
debate comes from majority voting rather than from the exchange itself
\citep{smit2023should, choi2026debate}; and many failures of agent systems
trace to inter-agent misalignment and weak verification
\citep{cemri2026multi}. Cost is a further concern, since invoking a full
agent system on every query is expensive, which motivates query-dependent
agent architectures \citep{liu2023dynamic, zhang2025multi}. \ours{} is
designed around these findings: solvers reason independently and never see
the base model's candidates or each other, their answers are aggregated
directly rather than through a candidate-aware judge, which inherits the
judge biases above (Table~\pcref{tab:direct-comparison}), the panel is
heterogeneous because homogeneous solvers share the base model's error modes
(Section~\pcref{sec:ablation-homogeneous}), and it is invoked only on
reasoning-stage disagreements.

\section{Conclusion}
\label{sec:conclusion}

We show that model-specific error bias persists across repeated sampling,
continued reasoning, and self-evaluation, so that more compute from the same
model does not amount to more evidence. \ours{} uses reasoning-stage
disagreement to trigger candidate-blind solving by heterogeneous models and
aggregates their answers directly. Across seven benchmarks, it outperforms CoT
in 12 of 14 settings and achieves higher average accuracy with fewer tokens
than the multi-agent baselines. Ablations show that asking heterogeneous
models to compare the base model's reasoning traces inherits its final-stage
preference, and that replacing the heterogeneous panel with repeated samples
of the base model sharply reduces arbitration accuracy, so independent
solving and solver heterogeneity are both necessary. The gains persist as the
base model scales to 80B while the external solvers remain fixed. Test-time
scaling should therefore consider where additional evidence comes from and
how it is combined, not only how much computation is used.

\section*{Ethics Statement}
We confirm that all authors have read and adhered to the ICLR Code of Ethics. This work complies with the ethical principles outlined therein, and no part of this study violates ethical standards.

\bibliography{references}
\bibliographystyle{iclr2027_conference}

\clearpage
\appendix
\section{Appendix}
\label{app:appendix}

\subsection{Dataset Statistics and Sampling}
\label{app:dataset-statistics}

We evaluate seven benchmark families spanning the four reasoning domains
introduced in the main text. All evaluation subsets are fixed before inference,
and the same canonical inputs are shared across methods and base models.

\paragraph{Knowledge-intensive reasoning.}
MMLU-Pro \citep{wang2024mmlupro} contains 12,032 test questions from 14 broad
knowledge domains. We sample 25 questions from each domain, yielding
a balanced 350-question subset. For GPQA \citep{rein2023gpqa}, we use the full
Diamond split of 198 questions. The split contains
19 Biology, 93 Chemistry, and 86 Physics questions.

\paragraph{Contextual and multi-hop reasoning.}
For DetectBench \citep{gu2024detectbench}, we use 279 questions sampled without
replacement from the 3,928 questions across the official train, dev, test,
test-hard, and test-distract files.
For MuSiQue \citep{trivedi2022musique}, we sample 300 answerable questions from
the 2,417-question MuSiQue-Ans development pool; the resulting subset contains
163 two-hop, 84 three-hop, and 53 four-hop questions. For 2WikiMultiHopQA
\citep{ho2020twiki}, we sample 200 questions from the 12,576-example development
split. This
subset contains 44 bridge-comparison, 48 comparison, 83 compositional, and 25
inference questions.

\paragraph{Mathematical reasoning.}
MATH \citep{hendrycks2021math} covers seven subjects: Algebra, Counting and
Probability, Geometry, Intermediate Algebra, Number Theory, Prealgebra, and
Precalculus. We independently sample 100 questions from each of difficulty
Levels 3, 4, and 5 from the official test split. This produces
three equally sized subsets, or 300 questions in total, and balances the
evaluation across difficulty levels.

\paragraph{Narrative and soft reasoning.}
MuSR \citep{sprague2024musr} comprises murder mysteries, object placements, and
team-allocation tasks. We sample 100 questions from each of the object-placement
and team-allocation task families, yielding 200 evaluation questions in total.

\subsection{SFT and DPO Baselines}
\label{app:sft-dpo}

For each dataset we construct a separate training set, sample the base model
eight times per question, and label each sample as correct or incorrect. SFT
trains the base model for two epochs on the correct samples; DPO then trains
for one epoch on preference pairs in which a correct sample is preferred over
an incorrect sample of the same question. These baselines test whether
repeatedly observed error modes can be learned away, as opposed to bypassed at
inference time using complementary external evidence.

\subsection{Additional Analyses}

\subsubsection{Activation Rates}
\label{app:activation-rates}

Table~\ref{tab:activation-rate} reports the fraction of examples for which
the intermediate-stage and final-stage predictions disagree and selective
arbitration is invoked. Activation is
most frequent on MuSiQue and least frequent on GPQA-Diamond, while the overall
rates remain similar across Qwen3-4B and Qwen3-8B.

\begin{table}[H]
    \centering
    \caption{Activation rates. Each entry is the percentage of evaluation examples for which the intermediate-stage and final-stage predictions disagree and arbitration is invoked.}
    \label{tab:activation-rate}
    \setlength{\tabcolsep}{10pt}
    \begin{tabular}{lcc}
        \toprule
        Dataset & Qwen3-4B & Qwen3-8B \\
        \midrule
        DetectBench & 20.79\% & 17.92\% \\
        MMLU-Pro & 19.43\% & 17.43\% \\
        GPQA-Diamond & 2.02\% & 1.01\% \\
        MATH L3 & 9.00\% & 5.00\% \\
        MATH L4 & 5.00\% & 6.00\% \\
        MATH L5 & 9.00\% & 14.00\% \\
        MATH overall & 7.67\% & 8.33\% \\
        MuSiQue & 33.67\% & 35.33\% \\
        2WikiMultiHopQA & 11.00\% & 13.00\% \\
        MuSR & 21.50\% & 21.50\% \\
        Overall & 17.46\% & 17.13\% \\
        \bottomrule
    \end{tabular}
\end{table}

\subsubsection{Reviewer Count}

We test whether increasing the reviewer ensemble improves arbitration. The base
model is fixed to Qwen3-8B. Starting from DeepSeek-v4-Flash,
Llama-3.3-70B-Instruct, and GPT-4o-mini, we add Phi-4 and
Mistral-Small-3.2-24B to form a five-reviewer ensemble.

\begin{table}[H]
    \setlength{\abovecaptionskip}{0pt}
    \setlength{\belowcaptionskip}{\baselineskip}
    \centering
    \caption{Effect of increasing the reviewer ensemble from three to five models. Results are evaluated only on activated examples, where the intermediate-stage and final-stage predictions disagree. Bold values emphasize arbitration accuracy; the smaller gray text reports the corresponding number of correct decisions. $\Delta$ denotes the change in the number of correct decisions.}
    \label{tab:reviewer-count}
    \footnotesize
    \begin{tabular}{lrrrr}
        \toprule
        Dataset & Activated & 3 Reviewers & 5 Reviewers & $\Delta$ \\
        \midrule
        MuSR & 43 & \textbf{55.81\%} {\color{gray}\scriptsize (24/43)} & \textbf{55.81\%} {\color{gray}\scriptsize (24/43)} & 0 \\
        2WikiMultiHopQA & 26 & \textbf{53.85\%} {\color{gray}\scriptsize (14/26)} & \textbf{53.85\%} {\color{gray}\scriptsize (14/26)} & 0 \\
        DetectBench & 50 & \textbf{48.00\%} {\color{gray}\scriptsize (24/50)} & \textbf{46.00\%} {\color{gray}\scriptsize (23/50)} & $-1$ \\
        \bottomrule
    \end{tabular}
\end{table}

Increasing the ensemble from three to five reviewers provides no improvement:
performance is unchanged on MuSR and 2WikiMultiHopQA and decreases by one
correct arbitration on DetectBench. Additional voters therefore do not help
when their judgments do not add complementary decision signals.

\subsection{Qualitative Comparison: MAS-Exchange Contamination and \ours{} Arbitration}
\label{app:full-mas-contamination}

Figure~\ref{fig:full-mas-contamination} illustrates how unrestricted
cross-agent communication can overturn initially correct independent
solutions. In this DetectBench example, DeepSeek-v4-Flash and
Llama-3.3-70B-Instruct both select the gold answer D in the first round.
After observing the other agents' responses, however, both revise their
answers to C. Although Qwen3-4B moves in the opposite direction and corrects
its answer from C to D, it is outvoted by the three agents that ultimately
select C. The second communication round therefore converts a split panel
containing two correct solutions into an incorrect majority decision.

Figure~\ref{fig:efira-detectbench-case} shows \ours{} on the same
question. The intermediate-stage prediction is the incorrect option C despite
its high confidence, whereas the final-stage prediction reaches the gold answer
D. The three candidate-blind solvers independently return D, D, and C, so
answer-level voting selects the final-stage prediction and preserves the
correct answer. Together, the two examples contrast peer
communication that propagates an incorrect answer with independent solving
that retains complementary evidence until the final vote.

Figure~\ref{fig:efira-mmlu-case} gives a second qualitative example on
MMLU-Pro. The intermediate-stage prediction is E, whereas the final-stage
prediction reaches the gold answer C. The three
independent solvers return C, B, and C, so the two-to-one vote selects C. This
case illustrates how heterogeneous solvers can resolve a reasoning-stage
disagreement even when one solver follows a different error mode.

Figure~\ref{fig:efira-musr-case} gives a complementary example on MuSR. The
intermediate-stage prediction is C, whereas the final-stage prediction reaches
the gold answer B. The independent solvers return B, B, and C, so answer-level
voting preserves the correct final-stage prediction.

Figure~\ref{fig:efira-gpqa-case} shows a GPQA-Diamond example in which the
intermediate-stage prediction is the gold answer C, while the final-stage
prediction changes to A after applying an incorrect
linear-scaling shortcut to the particle-survival probability. The independent
solvers return C, C, and A. The two-to-one vote therefore recovers the
exponential-decay calculation and preserves the correct intermediate-stage
prediction.

Figure~\ref{fig:efira-math-case} gives a mathematical-reasoning example on
MATH Level 3. The intermediate-stage prediction is the correct minimum value
6, whereas the completed reasoning trace concludes 7 after an incorrect
symmetry argument. The independent solvers return 6, 7, and 6, so their
non-unanimous vote selects the intermediate-stage prediction. This
case illustrates how heterogeneous evidence can prevent additional reasoning
from reinforcing a later mathematical error.

\begin{figure}[p]
    \centering
    \includegraphics[width=0.98\textwidth,height=0.86\textheight,keepaspectratio]{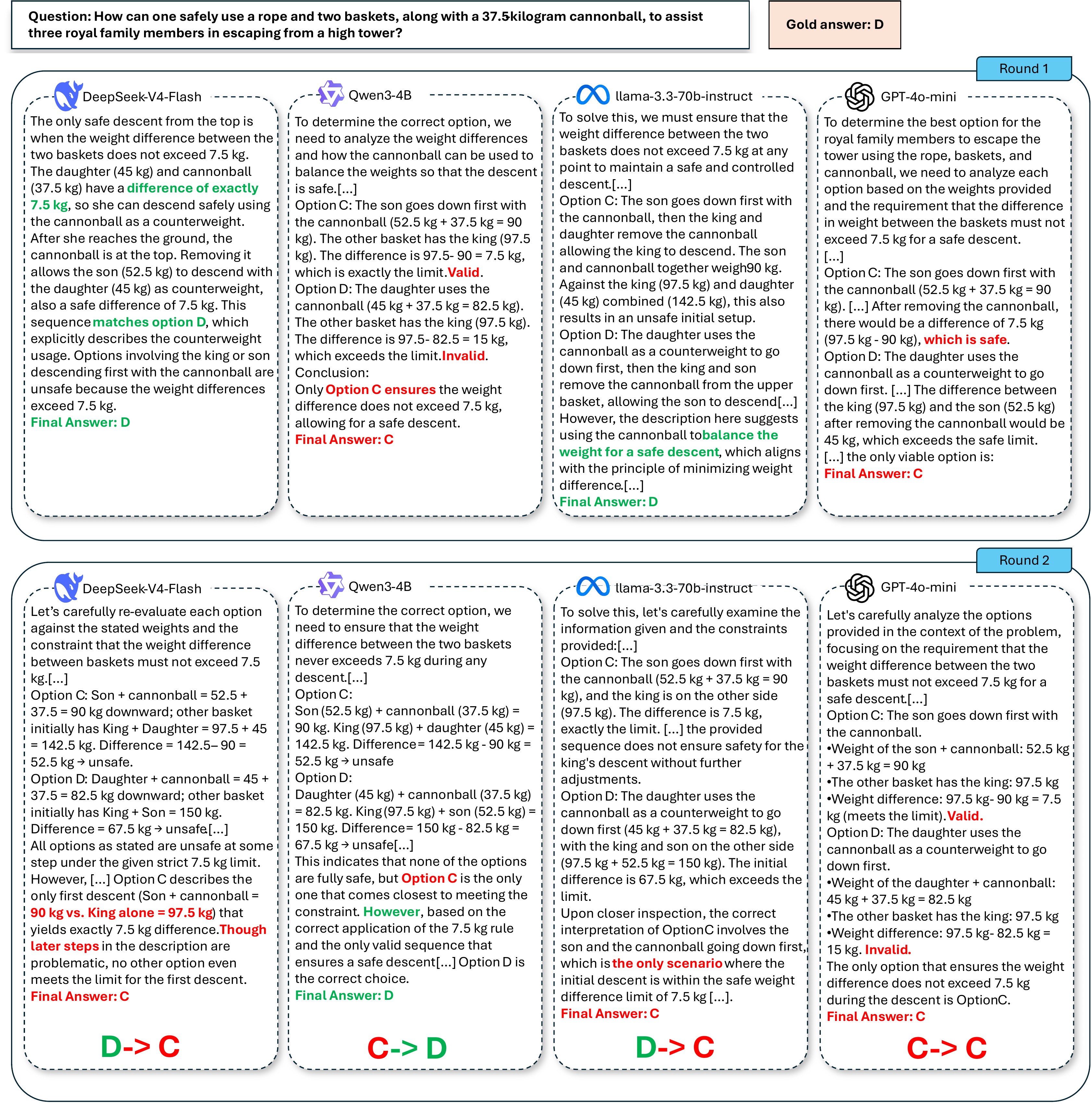}
    \caption{A MAS-Exchange failure case on DetectBench. Round 1 contains two
    correct independent predictions (D) and two incorrect predictions (C).
    After all agents exchange their responses, the two initially correct
    agents switch from D to C, while only Qwen3-4B switches from C to D.
    GPT-4o-mini remains at C, producing a three-to-one incorrect majority in
    Round 2. Green and red text denote correct and incorrect reasoning or
    predictions, respectively; the arrows summarize each agent's answer
    transition.}
    \label{fig:full-mas-contamination}
\end{figure}

\begin{figure}[p]
    \centering
    \includegraphics[
        width=0.98\textwidth,
        height=0.88\textheight,
        keepaspectratio
    ]{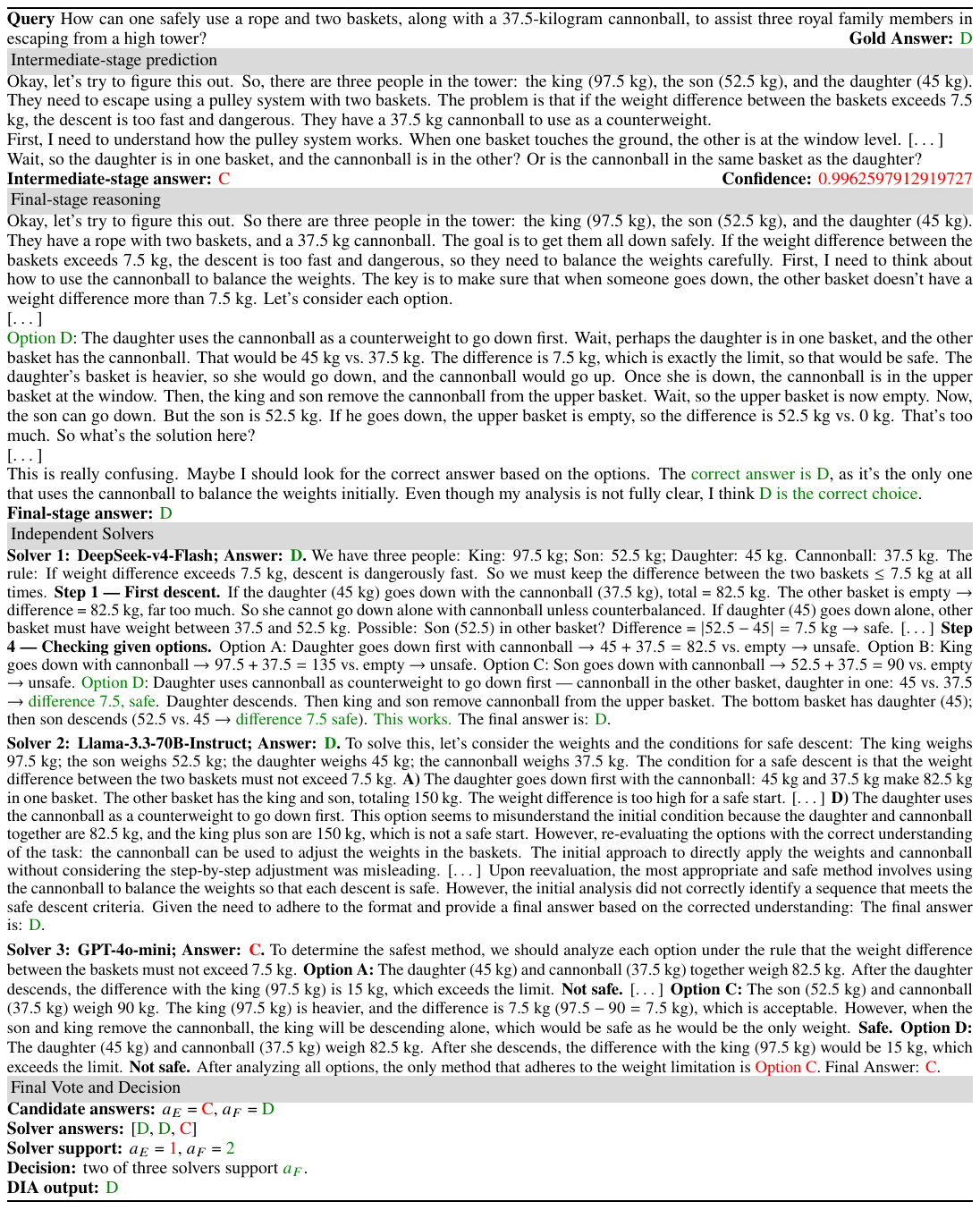}
    \caption{The corresponding \ours{} decision on the same DetectBench
    question. The intermediate-stage prediction is C with high confidence,
    while the final-stage prediction is the gold answer D. Candidate-blind
    independent solvers vote D, D, and C, and answer-level arbitration selects
    the final-stage prediction by a two-to-one majority. Unlike
    MAS-Exchange, the solvers do not observe one another's responses before the
    vote. Green and red text denote correct and incorrect reasoning or
    predictions, respectively.}
    \label{fig:efira-detectbench-case}
\end{figure}

\begin{figure}[p]
    \centering
    \includegraphics[
        width=0.98\textwidth,
        height=0.86\textheight,
        keepaspectratio
    ]{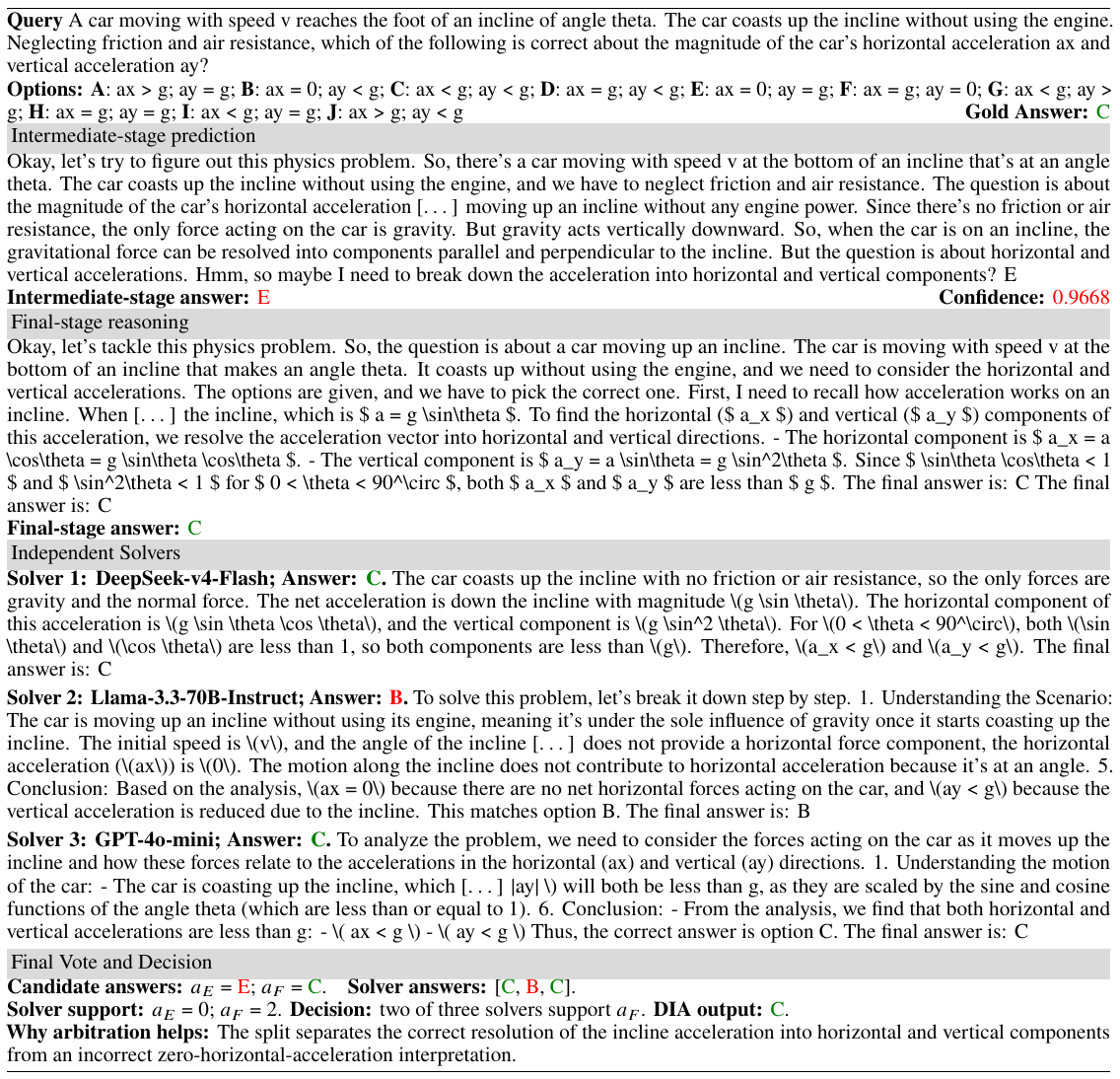}
    \caption{A qualitative \ours{} example on MMLU-Pro (Question 10051;
    Qwen3-4B base model).}
    \label{fig:efira-mmlu-case}
\end{figure}

\begin{figure}[p]
    \centering
    \includegraphics[
        width=0.98\textwidth,
        height=0.86\textheight,
        keepaspectratio
    ]{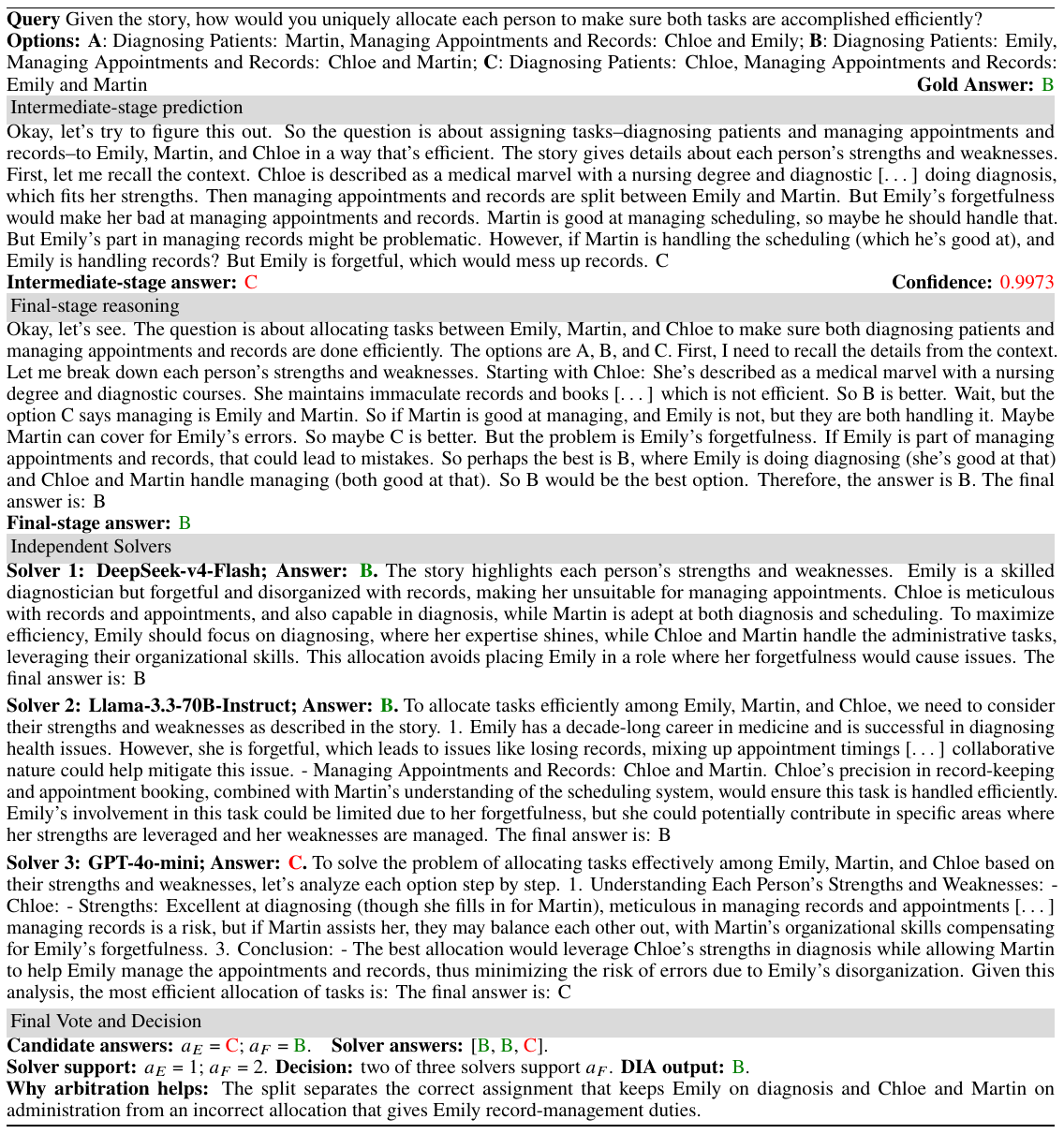}
    \caption{A qualitative \ours{} example on MuSR (Question 109;
    Qwen3-8B base model).}
    \label{fig:efira-musr-case}
\end{figure}

\begin{figure}[tb]
    \centering
    \includegraphics[
        width=0.98\textwidth,
        height=0.86\textheight,
        keepaspectratio
    ]{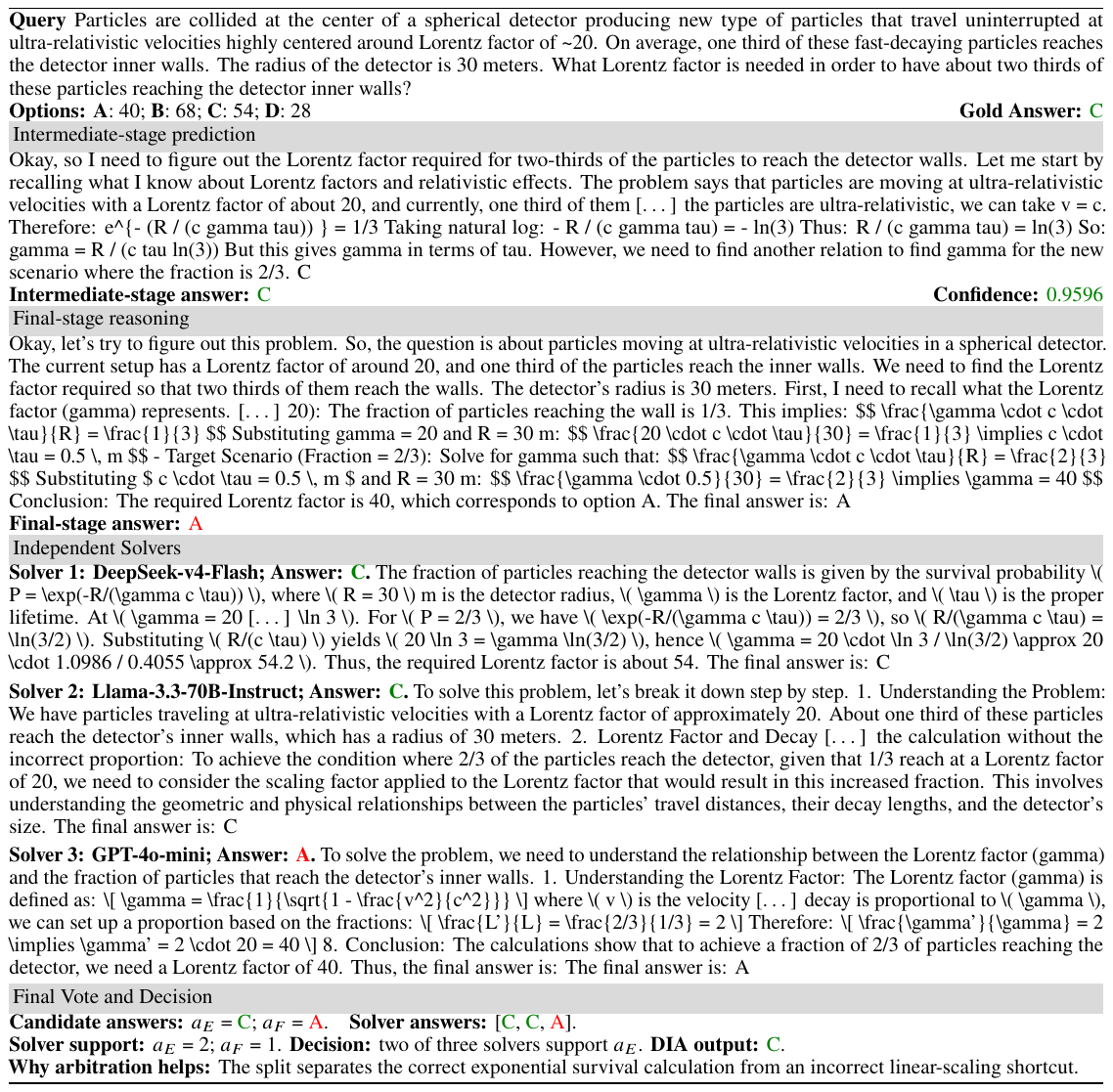}
    \caption{A qualitative \ours{} example on GPQA-Diamond (Question
    \texttt{recmkvk6EFAmqyMxR}; Qwen3-8B base model).}
    \label{fig:efira-gpqa-case}
\end{figure}

\begin{figure}[tb]
    \centering
    \includegraphics[
        width=0.98\textwidth,
        height=0.86\textheight,
        keepaspectratio
    ]{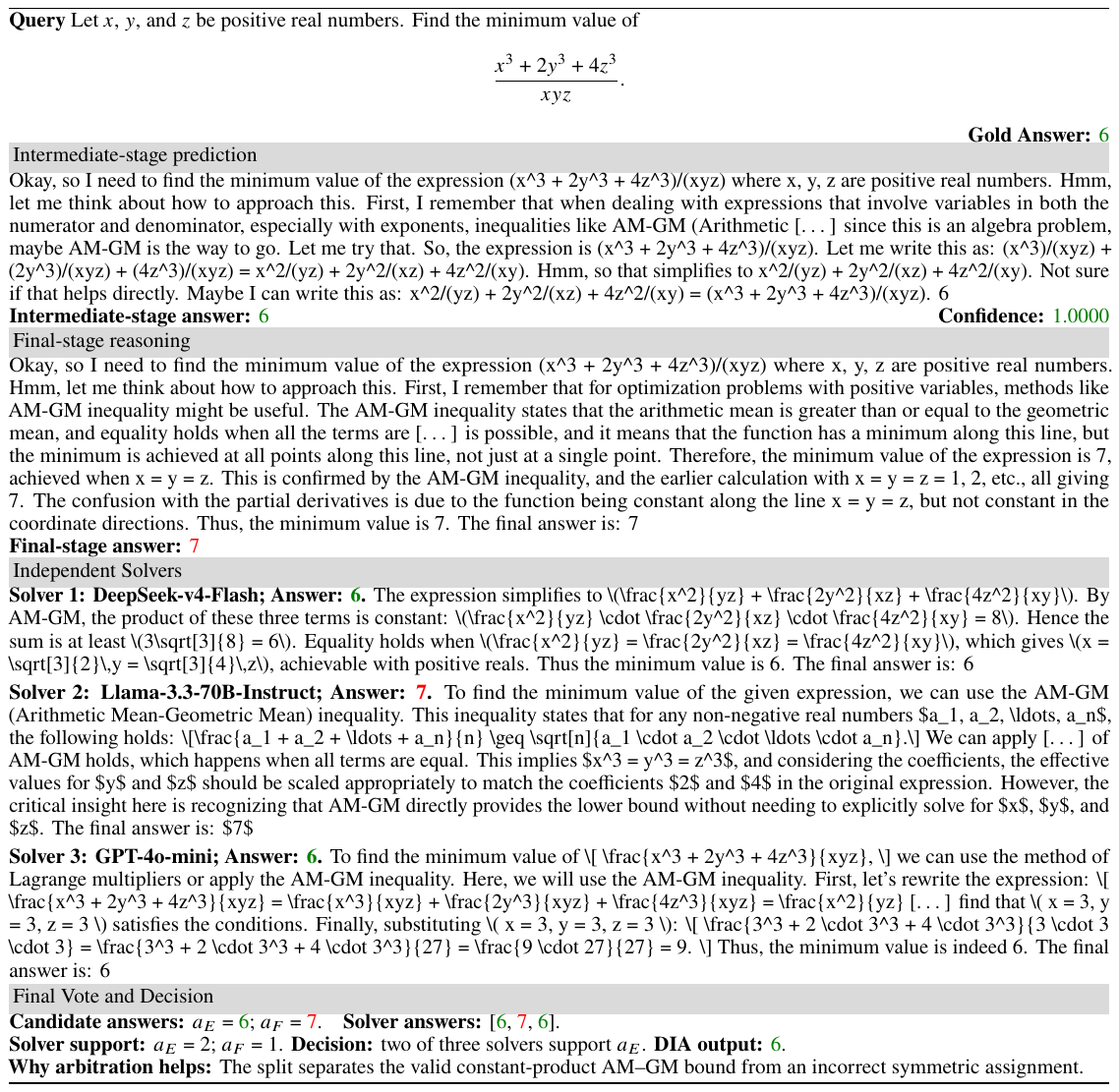}
    \caption{A qualitative \ours{} example on MATH Level 3 (Question
    200; Qwen3-4B base model).}
    \label{fig:efira-math-case}
\end{figure}

\clearpage
\section{Proofs}
\label{app:proofs}

This appendix gives the formal versions of Propositions~\ref{prop:margin}
and~\ref{prop:exposure} and their proofs.

\noindent\textbf{Notation.}
Fix a question $q$ with gold answer $y^\star$ in a finite answer set
$\mathcal{A}$, $|\mathcal{A}|=K$. Solver $i$ answers by sampling
$V_i\sim P_i(\cdot\mid q)$; write $p_i=P_i(y^\star\mid q)$ and
$q_i(w)=P_i(w\mid q)$ for $w\neq y^\star$. For a panel $S$ of $n$ solvers
let $N_a=\sum_{i\in S}\ind{V_i=a}$, $\bar p_S=\frac1n\sum_{i\in S}p_i$,
\[
\Delta_w=\frac{1}{n}\sum_{i\in S}\big[p_i-q_i(w)\big],
\qquad
\kappa_S=\frac{\max_{w\neq y^\star}\sum_{i\in S}q_i(w)}{\sum_{i\in S}(1-p_i)},
\qquad
\Delta_S=\min_{w\neq y^\star}\Delta_w .
\]
Since $\sum_{i\in S}(1-p_i)=n(1-\bar p_S)$, the definition of $\kappa_S$
gives $\max_w\frac1n\sum_i q_i(w)=\kappa_S(1-\bar p_S)$ and hence
$\Delta_S=\bar p_S-\kappa_S(1-\bar p_S)$, which is Eq.~(\ref{eq:margin}).

\subsection{Formal Statement and Proof of Proposition~\ref{prop:margin}}

\begin{propositionf}[Sampling amplifies the sign of the margin]
\label{propf:margin}
Let $V_1,\dots,V_n$ be independent conditional on $q$.
\textup{(i)}~$y^\star$ is the unique maximizer of $\E[N_a]$ over
$a\in\mathcal{A}$ if and only if $\Delta_S>0$.
\textup{(ii)}~If $\Delta_S>0$, then
$\Pr\big[\exists\,w\neq y^\star:\;N_w\ge N_{y^\star}\big]
\le (K-1)\exp\!\big(-n\Delta_S^2/2\big)$.
\textup{(iii)}~If $\Delta_w<0$ for some $w\neq y^\star$, then
$\Pr\big[N_{y^\star}>N_w\big]\le\exp\!\big(-n\Delta_w^2/2\big)$.
\end{propositionf}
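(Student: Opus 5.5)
The plan is to reduce every claim to the pairwise differences
\[
D_w=\frac1n\big(N_{y^\star}-N_w\big)=\frac1n\sum_{i\in S}X_i^{(w)},\qquad X_i^{(w)}=\ind{V_i=y^\star}-\ind{V_i=w}\in\{-1,0,1\},
\]
one for each wrong answer $w\neq y^\star$. By linearity, $\E[D_w]=\frac1n\sum_i[p_i-q_i(w)]=\Delta_w$.

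\textbf{Part (i).} We have $\E[N_{y^\star}]-\E[N_w]=n\Delta_w$. Hence $y^\star$ is the unique maximizer of $\E[N_a]$ if and only if $\Delta_w>0$ for every $w\neq y^\star$. Because $\mathcal{A}$ is finite, this is equivalent to $\min_w\Delta_w=\Delta_S>0$. The identity $\Delta_S=\bar p_S-\kappa_S(1-\bar p_S)$ was already established in the notation paragraph, so nothing further is needed for (i).

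\textbf{Part (ii).} Fix $w$. Conditional on $q$, the variables $X_i^{(w)}$ are independent, since each is a function of $V_i$ alone, and each lies in an interval of length $2$. Hoeffding's inequality gives
\[
\Pr[D_w\le 0]=\Pr\big[D_w-\Delta_w\le-\Delta_w\big]\le\exp\!\big(-2n^2\Delta_w^2/(n\cdot 2^2)\big)=\exp(-n\Delta_w^2/2).
\]
Since $\Delta_w\ge\Delta_S>0$, this is at most $\exp(-n\Delta_S^2/2)$. The event $\{\exists w: N_w\ge N_{y^\star}\}$ is the union of the $K-1$ events $\{D_w\le0\}$, so a union bound gives the factor $K-1$.

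\textbf{Part (iii).} Apply the same Hoeffding bound in the upper-tail direction: $\Pr[D_w>0]\le\Pr[D_w-\Delta_w\ge|\Delta_w|]\le\exp(-n\Delta_w^2/2)$. This is exactly $\Pr[N_{y^\star}>N_w]$.

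The only step needing care is choosing the right range for Hoeffding. Each $X_i^{(w)}$ takes values in $[-1,1]$, a range of width $2$, and this width is what produces the constant $1/2$ in the exponent. Beyond that, we only need to note that the $V_i$ need not be identically distributed, so the Hoeffding version for independent, non-identical bounded summands is the one to invoke. Everything else is linearity of expectation and a union bound.
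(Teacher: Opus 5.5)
Your proposal is correct and follows the paper's proof essentially step for step. Part (i) is linearity of expectation. For (ii) you apply Hoeffding to the independent summands $\mathbf{1}[V_i=y^\star]-\mathbf{1}[V_i=w]\in[-1,1]$, whose range $2$ gives the exponent $n\Delta_w^2/2$, and then take a union bound over the $K-1$ wrong answers. Part (iii) is the one-sided upper-tail Hoeffding bound.
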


\begin{proof}
\emph{(i)} By linearity, $\E[N_a]=\sum_{i\in S}P_i(a\mid q)$. Thus
$\E[N_{y^\star}]>\E[N_w]$ for every $w\neq y^\star$ if and only if
$\Delta_w>0$ for every $w\neq y^\star$, i.e.\ $\Delta_S>0$.

\emph{(ii)} Fix $w\neq y^\star$ and let
$X_i=\ind{V_i=y^\star}-\ind{V_i=w}\in[-1,1]$. The $X_i$ are independent with
$\E[X_i]=p_i-q_i(w)$, so $\sum_{i\in S}X_i=N_{y^\star}-N_w$ has mean
$n\Delta_w\ge n\Delta_S>0$. Hoeffding's inequality for independent variables
with range $2$ gives
\begin{align*}
\Pr\big[N_w\ge N_{y^\star}\big]
&=\Pr\Big[\sum_{i\in S}X_i-\E\sum_{i\in S}X_i\le -n\Delta_w\Big]\\
&\le\exp\!\Big(-\frac{2\,n^2\Delta_w^2}{\sum_{i\in S}2^2}\Big)
=\exp\!\Big(-\frac{n\Delta_w^2}{2}\Big)
\le\exp\!\Big(-\frac{n\Delta_S^2}{2}\Big).
\end{align*}
A union bound over the $K-1$ wrong answers completes the proof.

\emph{(iii)} With the same $X_i$, $\sum_i X_i$ now has mean
$n\Delta_w<0$, and
$\Pr[N_{y^\star}>N_w]=\Pr\big[\sum_i X_i-\E\sum_i X_i>n|\Delta_w|\big]
\le\exp(-n\Delta_w^2/2)$ by the one-sided Hoeffding bound.
\end{proof}

\begin{remark}[Same-model sampling and the sign of the margin]
Repeated samples from one model $m$ are i.i.d.\ from $P_m(\cdot\mid q)$, so
for a homogeneous panel $\bar p_S=p_m$ and $\kappa_S=C_m(q)$, and
$\Delta_S$ does not depend on $n$. Proposition~\ref{propf:margin} then says
that self-consistency converges to $\arg\max_a P_m(a\mid q)$: it raises
accuracy on questions with $\Delta_S>0$ and lowers it on questions with
$\Delta_S<0$, and no sampling budget moves a question between the two sets.
Changing the sign requires changing $\bar p_S$ or $\kappa_S$, i.e.\ changing
the models.
\end{remark}

\begin{remark}[Lower bound on $\kappa_S$]
Since the wrong mass $\sum_{i\in S}(1-p_i)$ is spread over at most $K-1$
answers, $\kappa_S\ge 1/(K-1)$ for every panel. The idealized value
$\kappa_S\approx 1/n$ requires $n\le K-1$, comparable wrong mass across
solvers, and dominant wrong answers on distinct options. For four-option
tasks the floor is $1/3$; for ten-option or free-form tasks it is much
smaller, which is where heterogeneity has the most room to act. Whether this
room translates into observed gains also depends on the activation rate,
solver strength, task difficulty, and answer normalization, so dataset-level
gains should not be read directly from this bound. Adding
solvers changes $n$, $\bar p_S$ and $\kappa_S$ together, so it helps only
insofar as it raises $\Delta_S$.
\end{remark}

\subsection{Formal Statement and Proof of Proposition~\ref{prop:exposure}}

\begin{propositionf}[Candidate exposure is a margin transform]
\label{propf:exposure}
Suppose that, before answering, every solver in $S$ observes the same
candidate $a\in\mathcal{A}$ and outputs $a$ with probability $\gamma_c$ if
$a=y^\star$ and with probability $\gamma_w$ if $a\neq y^\star$, and
otherwise samples from $P_i(\cdot\mid q)$. Let $\Delta'_w$ denote the
resulting margins.
\textup{(i)}~If $a=y^\star$, then
$\Delta'_w=(1-\gamma_c)\Delta_w+\gamma_c\ge\Delta_w$ for all $w$.
\textup{(ii)}~If $a=w_0\neq y^\star$, then
$\Delta'_{w_0}=(1-\gamma_w)\Delta_{w_0}-\gamma_w$ and
$\Delta'_w=(1-\gamma_w)\Delta_w$ for $w\neq w_0$; in particular
$\Delta'_{w_0}<0$ if and only if
$\gamma_w>\Delta_{w_0}/(1+\Delta_{w_0})$.
\textup{(iii)}~If $a$ is drawn from a source with $\Pr[a=y^\star]=r$ and
$\Pr[a=w]=r_w$ for $w\neq y^\star$, and $\gamma_c=\gamma_w=\gamma$, then
$\E[\Delta'_w]=(1-\gamma)\Delta_w+\gamma\,(r-r_w)$.
\end{propositionf}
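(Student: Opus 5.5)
The plan is to write down the post-exposure answer distribution of each solver explicitly and substitute it into the definition of $\Delta_w$. With exposure to candidate $a$ and adoption probability $\gamma\in\{\gamma_c,\gamma_w\}$, the mixture model in the statement gives
\[
P'_i(b\mid q)=\gamma\,\ind{b=a}+(1-\gamma)\,P_i(b\mid q),\qquad b\in\mathcal{A}.
\]
Hence $p'_i=\gamma\ind{a=y^\star}+(1-\gamma)p_i$ and $q'_i(w)=\gamma\ind{a=w}+(1-\gamma)q_i(w)$. Averaging over $i\in S$ and using the linearity of $\Delta_w$ in $(p_i,q_i(w))$ yields one master identity:
\[
\Delta'_w=(1-\gamma)\Delta_w+\gamma\big(\ind{a=y^\star}-\ind{a=w}\big).
\]
All three parts should follow from this identity. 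Since $\gamma$ is common to all solvers, it factors out of the average; if it were solver-specific, I would instead get a weighted average, but the statement assumes it is common.

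\emph{Part (i).} Set $a=y^\star$ and $\gamma=\gamma_c$. The bracket equals $1$ for every $w\neq y^\star$, so $\Delta'_w=(1-\gamma_c)\Delta_w+\gamma_c$. The inequality $\Delta'_w\ge\Delta_w$ is equivalent to $\gamma_c(1-\Delta_w)\ge0$. This holds because $\Delta_w\le\bar p_S\le1$.

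\emph{Part (ii).} Set $a=w_0$ and $\gamma=\gamma_w$. The bracket is $-1$ at $w=w_0$ and $0$ for the other $w\neq y^\star$, which gives both displayed formulas. For the threshold, $\Delta'_{w_0}<0$ is equivalent to $\Delta_{w_0}<\gamma_w(1+\Delta_{w_0})$. Here $1+\Delta_{w_0}>0$ needs a short justification. Note $\Delta_{w_0}\ge-\frac1n\sum_i q_i(w_0)\ge-1$, with equality only in the degenerate case where every solver puts all its mass on $w_0$. I would state that $1+\Delta_{w_0}>0$ holds outside that case, and that in the degenerate case both sides are trivially negative. Dividing by $1+\Delta_{w_0}$ then gives the claimed equivalence.

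\emph{Part (iii).} Take expectations of the master identity over the random candidate $a$ with $\gamma_c=\gamma_w=\gamma$. Using $\E[\ind{a=y^\star}]=r$ and $\E[\ind{a=w}]=r_w$, linearity gives $\E[\Delta'_w]=(1-\gamma)\Delta_w+\gamma(r-r_w)$ immediately. Requiring the two adoption rates to be equal is what lets $\gamma$ be pulled out of the expectation.

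The main obstacle is only bookkeeping: the sign condition in (ii) and the edge case $\Delta_{w_0}=-1$. Everything else is linear substitution.
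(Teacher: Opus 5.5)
Your proposal is correct and follows the paper's proof essentially step for step: the same mixture $P'_i(b\mid q)=(1-\gamma)P_i(b\mid q)+\gamma\,\ind{b=a}$, the same identity $\Delta'_w=(1-\gamma)\Delta_w+\gamma(\ind{a=y^\star}-\ind{a=w})$, and the same read-off of (i)--(iii). Checking that $1+\Delta_{w_0}>0$ before dividing is a small refinement the paper omits; your degenerate-case phrasing is slightly off, though, since at $\Delta_{w_0}=-1$ the threshold $\Delta_{w_0}/(1+\Delta_{w_0})$ is undefined, while $\Delta'_{w_0}=-1<0$ for every $\gamma_w$.
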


\begin{proof}
Let $\gamma$ denote $\gamma_c$ if $a=y^\star$ and $\gamma_w$ otherwise.
Solver $i$'s post-exposure answer distribution is the mixture
$P'_i(b\mid q)=(1-\gamma)P_i(b\mid q)+\gamma\,\ind{b=a}$, so
\[
p'_i=(1-\gamma)p_i+\gamma\,\ind{a=y^\star},
\qquad
q'_i(w)=(1-\gamma)q_i(w)+\gamma\,\ind{a=w}.
\]
Subtracting and averaging over $i\in S$,
\[
\Delta'_w=(1-\gamma)\Delta_w+\gamma\big(\ind{a=y^\star}-\ind{a=w}\big).
\]
\emph{(i)} With $a=y^\star$ and $\gamma=\gamma_c$ the bracket equals $1$ for
every $w$, giving $\Delta'_w=(1-\gamma_c)\Delta_w+\gamma_c$. Since
$\Delta_w\le 1$, this is at least $\Delta_w$.

\emph{(ii)} With $a=w_0$ and $\gamma=\gamma_w$ the bracket equals $-1$ for
$w=w_0$ and $0$ otherwise, giving the two stated expressions. Solving
$(1-\gamma_w)\Delta_{w_0}-\gamma_w<0$ for $\gamma_w$ yields
$\gamma_w>\Delta_{w_0}/(1+\Delta_{w_0})$.

\emph{(iii)} Taking expectation over $a$ with $\gamma_c=\gamma_w=\gamma$,
$\E[\ind{a=y^\star}-\ind{a=w}]=r-r_w$, and the claim follows by linearity.
\end{proof}

\begin{remark}[Reading of part (iii)]
The quantity $r-r_w$ is the source's own margin against $w$, so exposure
contracts each solver's margin toward the source's margin by the factor
$\gamma$. It raises the expected margin if and only if the source is more
reliable than the panel. In Section~\ref{sec:motivation-self-selection} the
source is the final-stage prediction with $\gamma\approx 0.95$ and a poor
margin, and in MAS-Exchange the source is a peer of comparable quality, for which
$\E[\Delta'_w]=\Delta_w$. Candidate-blind solving is $\gamma_c=\gamma_w=0$.
\end{remark}

\begin{remark}[Exposure and dependence]
When the candidate $a$ is fixed, the post-exposure votes remain independent
conditional on $(q,a)$; exposure changes the margin, not the independence
structure. Dependence among solvers arises only when the candidate is itself
random and shared, as when agents read one another's first-round answers in
MAS-Exchange. Proposition~\ref{propf:exposure}(iii) shows that in that case the
expected margin of a panel of comparable solvers is unchanged, so any change
in vote accuracy is due to the induced dependence; we do not attempt to bound
this effect and treat the example in
Appendix~\ref{app:full-mas-contamination} as illustrative only.
\end{remark}

\section{AI Use Statement}

LLMs were used as general-purpose assistance tools in a limited capacity during the preparation of this work. Specifically, they assisted with grammar and spelling correction, mathematical derivations, and implementation support for the proposed method and experiments. The research ideas, methodological choices, experimental design, analysis and interpretation of results, and conclusions were developed and determined by the authors. All LLM-assisted derivations, code, and textual revisions were independently reviewed, tested, and verified by the authors before being incorporated into this work. The authors take full responsibility for the scientific content and final presentation of the paper.

\end{document}